\newtheorem{theorem}{Theorem}
\newtheorem{lemma}{Lemma}
\newtheorem{proposition}{Proposition}
\newtheorem{definition}{Definition}
\newtheorem{remark}{Remark}
\title{Robust Regularized Low-Rank Matrix Models for Regression and Classification}
\author{%
    Hsin-Hsiung Huang\\
    Department of Statistics and Data Science\\
    University of Central Flroida\\
    Orlando, FL 32816 \\
    \texttt{Hsin-Hsiung.Huang@ucf.edu} \\
    \And
    Feng Yu \\
    Department of Mathematics and Statistics \\
    State University of New York at Albany \\
    Albany, NY 12203 \\
    \texttt{fyu5@albany.edu} \\
    \AND
    Xing Fan \\
    Department of Mathematics\\
    University of Central Florida\\
    Orlando, FL 32816 \\
    \texttt{fanxing@knights.ucf.edu} \\
    \AND
    Teng Zhang\thanks{Corresponding author} \\
    Department of Mathematics\\
    University of Central Florida\\
    Orlando, FL 32816 \\
    \texttt{teng.zhang@ucf.edu} \\
}
\begin{document}

\maketitle

\begin{abstract}
	While matrix variate regression models have been studied in many existing works, classical statistical and computational methods for the analysis of the regression coefficient estimation are highly affected by high dimensional and noisy matrix-valued predictors. To address these issues, this paper proposes a framework of matrix variate regression models based on a rank constraint, vector regularization (e.g., sparsity), and a general loss function with three special cases considered: ordinary matrix regression, robust matrix regression, and matrix logistic regression. We also propose an alternating projected gradient descent algorithm. Based on analyzing our objective functions on manifolds with bounded curvature, we show that the algorithm is guaranteed to converge, all accumulation points of the iterates have estimation errors in the order of $O(1/\sqrt{n})$ asymptotically and substantially attaining the minimax rate. Our theoretical analysis can be applied to general optimization problems on manifolds with bounded curvature and can be considered an important technical contribution to this work. We validate the proposed method through simulation studies and real image data examples.
\end{abstract}

\keywords{Electroencephalography (EEG) \and  generalized linear model (GLM) \and high dimensionality \and $L_1$ norm \and microscopic leucorrhea images \and rank constraint \and sparsity}

\section{Introduction}\label{Intro}
In numerous modern scientific applications, matrix-valued predictors of interest naturally exist in massive data sets, for example, a grayscale image which quantifies intensities of image pixels is represented as a two-dimensional (2D) data matrix \citep{zhou2014regularized}, and the recommendation system of online services records the preferences of users over their products as a matrix \citep{elsener2018robust}. Other applications include the brain signal electroencephalography (EEG) data set \citep{HungWang2012,zhou2014regularized}, the microscopic leucorrhea images \citep{hao2019automatic} and the diabetes data \citep{li2021double}. To analyze such data sets, it is of great interest to adapt traditional tools such as regression and logistic regression to matrix-valued covariates. While we may vectorize a matrix-valued covariate as a vector and apply standard procedures, the size of the vector might be large and standard algorithms are inefficient and computationally expensive.  In addition, these methods cannot identify the row and column effects due to ignoring the inherent matrix structure. To overcome these problems, many algorithms have been proposed with a consideration of the matrix structure. In particular, for the matrix-variate data, the true signals frequently have sparse structures, or can be well approximated by a low rank structure. We remark that the low-rank property can be considered as sparsity in terms of the rank of the matrix parameters, which is intrinsically different from sparsity in the number of nonzero entries.

To induce such a sparse or a low-rank  structure, a variety of new regression tools have been proposed. For example, nuclear-norm penalized optimization has been applied in several works \citep{elsener2018robust,10.5555/3112656.3112880,10.1214/10-AOS850,fan2016shrinkage} to induce a low-rank structure. In particular, \citet{10.1214/10-AOS850} considered a standard M-estimator based on regularization by the nuclear or trace norm over matrices, and analyzed its performance under high-dimensional setting, and \citet{elsener2018robust} considered robust nuclear norm penalized estimators using two well-known robust loss functions: the absolute value loss and the Huber loss, and derive the asymptotic performance of these estimators.  \citet{fan2016shrinkage} applied the nuclear-norm penalized least-squares approach to appropriately truncated or shrunk data to  four popular problems: sparse linear models \citep{tibshirani1996regression}, compressed sensing \citep{donoho2006compressed}, matrix completion \citep{candes2009exact}, and multitask learning \citep{caruana1997multitask}, as well as robust covariance estimation \citep{campbell1980robust}. Besides nuclear norm based regularization, other penalization methods have been used in existing works as well. For example,  \citet{10.1214/10-AOS860} applied a Schatten-$p$ quasi-norm penalized least squares estimator, where $p\leq 1$, and obtain non-asymptotic upper bounds on the prediction risk and on the Schatten-$q$ risk of the estimators, where $p\leq q\leq 2$, and \citet{zhou2014regularized} proposed a method that solves $\Cb$ with spectral regularization, which is a class of penalties including both the nuclear norm and Schatten-$p$ quasi-norm as special cases. In addition, the multivariate group Lasso \citep{10.1214/09-AOS776}, multivariate sparse group Lasso \citep{Li2015} and fused Lasso \citep{li2021double} have also been used in existing works.

Besides regularization, another direction is to apply the low-rank constrained optimization that it has been studied for many other matrix estimation problems, including matrix sensing \citep{pmid28316458,doi:10.1137/18M1224738} and robust principal component analysis~\citep{10.5555/3291125.3309642} directly in the estimator. Compared with regularization-based methods, this direction is relatively less studied in the matrix variate regression and logistic regression settings. \citet{zhou2013tensor} applied the low-rank constraint to a tensor structure (which can be considered as a generalization of a matrix structure) and uses a block relaxation algorithm to solve the problem. However, there only exists a limited theoretical guarantee for their proposed block relaxation algorithm: consistency is only established for the unregularized estimator, and there does not exist a consistency result for the regularized estimator under the low-rank constraint. \cite{HungWang2012} proposed a maximum likelihood estimator for the model of matrix variate logistic regression based on rank-one matrix decomposition. \cite{hung2019low} applies the low-rank structure of the matrix-covariate under the same model as in this work. However, it focuses on the problem of hypothesis testing, rather than estimation as in our work.

While existing works either apply a vector-based regularization   (such as $\ell_1$ or total variation penalization \cite{li2021double}) or matrix-based  regularization such as  (such as nuclear norm penalization or low-rank constraint used in \cite{zhou2013tensor,zhou2014regularized,elsener2018robust}), the proposed estimators combine the vector-based regularization with a low-rank constraint and further improve performance. This method is consistent with the empirical observation that ``double regularization'' sometimes improves the performance, for example, the popular elastic net regularization method~\citep{zou2005regularization} uses both $\ell_1$ and $\ell_2$ regularization and achieves improved performance under certain scenarios. In addition, estimating simultaneously sparse and low-rank matrices has been studied in \cite{10.1093/imaiai/iaw012} and have applications in image processing and phase retrieval \citep{vaswani2017low}.

Our work in this paper has three main contributions. First, we propose an alternating projected gradient descent algorithm and apply it to various settings such as ordinary matrix regression, robust matrix regression, and matrix logistic regression, and the algorithm achieves the minimax estimation error when it is well-initialized. To overcome an obstacle of the nonlinear constraint in the optimization problem, we consider the constraint set in \eqref{eq:optimization} as a manifold and generalize the ``second-order Taylor expansion'' to the function defined on a manifold. Second, for the matrix variate regression problem, we investigate a method based on regularized optimization with a low-rank constraint and establish its consistency, which has not been fully studied in existing works \citep{zhou2013tensor,HungWang2012}. 
In addition, we show that when the rank is small, our estimation error is smaller than the the regularization-based methods without rank constraints \citep{li2021double}, and acquires the desired statistical accuracy in a minimax sense \citep{tsybakov2008introduction,koltchinskii2011nuclear,she2021analysis}. Third, we propose a robust regularized low-rank regression model which provides accurate predictions when data contain outliers or have heavier tails than normal distributions
\citep{she2017robust}. Our consistency theorem (Theorem~\ref{prop:convergence2}) does not depend on the convexity of the loss function, and hence it applies to various choices of $l$, including popular loss function functions in robust statistics such as redescending $\psi$’s, Hampel’s loss, or Tukey’s bisquare, etc.~\citep{10.1214/aoms/1177703732,maronna2018robust,she2017robust,huang2020robust}. In addition, our theoretical analysis can be applied to general optimization problems on manifolds with bounded curvature, and thus can be considered as an important technical contribution of this paper.

\section{Methodology}\label{sec:methodology}

In this section, we introduce the regression models used in this work. Following the existing works such as \cite{zhou2014regularized}, \cite{10.1214/10-AOS860} and \cite{li2021double}, we use a model that contains both matrix-valued predictors in $\reals^{m\times q}$ and vector-valued predictors in  $\real^p$. This matrix variate regression model assumes a  coefficient matrix $\bC^*\in\reals^{m\times q}$ and a coefficient vector $\gamma^*\in\real^p$, and for each $1\leq i\leq n$, the $i$-th univariate response $y_i$ is generated from the matrix predictor $\Xb_i\in \real^{m \times q}$ and $\bz_i\in\reals^{p}$ by
\begin{equation}\label{eq:model1}
    y_i=\langle \Xb_i,\bC^*\rangle+\langle \bz_i,\gamma^*\rangle+\epsilon_i,
\end{equation}
where $\langle \Xb_i,\bC^*\rangle$ is defined as the trace of $\bC^{*T}\Xb_i$, $\langle \bz_i,\gamma^*\rangle$ is defined as $\bz_i^T\gamma^*$, and $\{\epsilon_i\}_{i=1}^n$ are random errors. We remark that the trace regression model in \cite{10.1214/10-AOS860} and \cite{elsener2018robust} can be considered as a special case of this model when $\gamma^*$ is zero.

In binary classification problems, the response variables are binary with value $1$ or $0$ to indicate the presence or absence of the target category. For such applications, we follow the matrix variate logistic regression model from \cite{HungWang2012} and \cite{li2021double} and assume that the response $y_i$ is a binary variable such that
\begin{equation}\label{eq:model2}
\mathrm{logit}\left(P(y_i=1\mid \Xb_i)\right)=\log\left(\frac{P(y_i=1|\Xb_i)}{1-P(y_i=1|\Xb_i)}\right)=\langle \Xb_i,\bC^*\rangle+\langle \bz_i,\gamma^*\rangle,
\end{equation}
and the explicit formula is
\begin{equation}\label{eq:model2-2}
    y_i\sim \mathrm{Bernoulli}(p_i), \,\,\mbox{where}\,\, p_i=\frac{e^{\theta_i}}{1+e^{\theta_i}}\mbox{ and } \theta_i=\langle \Xb_i,\bC^*\rangle+\langle \bz_i,\gamma^*\rangle.
\end{equation}
The matrix variate regression model \eqref{eq:model1} and the matrix variate logistic regression model \eqref{eq:model2} are  generalizations of the regular regression model and the regular logistic regression model. The challenge is to estimate the coefficient matrix $\bC^*\in\reals^{m\times q}$ with a low-rank constraint and the coefficient vector $\gamma^*\in\real^p$, based on predictors $\{\bX_i,\bz_i\}_{i=1}^n\subset \reals^{m\times q}\times \reals^{p}$ and the associated responses $\{y_i\}_{i=1}^n$ generated from the matrix variate regression model \eqref{eq:model1} or the matrix variate logistic regression model \eqref{eq:model2}.


Since $\bC^*$ is assumed to have a low-rank structure (or can be approximated by a low-rank structure), we propose an estimator under the rank constraint as follows:
\begin{align}
(\hat{\Cb},\hat{\gamma})=\argmin_{\gamma\in\real^p, \Cb\in\real^{m\times q},\rank(\Cb)=r}\sum_{i=1}^n l(y_i,\langle \Xb_i,\Cb\rangle+\gamma^T\zb_i)+\lambda P(\bC,\gamma),
\label{eq:optimization}
\end{align}
where $l$ is a loss function that measures the difference between the observed response $y_i$ and predicted response $\langle \Xb_i,\Cb\rangle+\gamma^T\zb_i$. We apply two different loss functions to the matrix variate regression model and the robust matrix variate regression model \eqref{eq:model1} and another loss function for the matrix variate logistic regression model \eqref{eq:model2}. In summary, we consider the following three models, where the first two models follow Equation \eqref{eq:model1} and the third model follows Equation \eqref{eq:model2}.
\begin{itemize}
\item  \textbf{The ordinary matrix variate regression model}. Here we apply \eqref{eq:model1} and assume that $\epsilon_i$ follows a sub-Gaussian distribution with mean zero and sub-Gaussian parameter $\sigma$ \citep[Definition 2.2]{wainwright_2019}. For this model, we apply the least squares loss function
$
l(y_i,f_i)=(y_i-f_i)^2.
$
\item \textbf{The robust matrix variate regression model}. Here we still apply \eqref{eq:model1}, but assume that the errors $\{\epsilon_i\}_{i=1}^n$ are sampled from a zero-mean (and possibly heavy-tail) distribution. Then we apply Huber’s loss function \citep{10.1214/aoms/1177703732}:
$
l(y_i,f_i)=\rho_{\alpha}\left(|y_i-f_i|\right),
$
where
\begin{equation}
\rho_{\alpha}(t) = \begin{cases}
\frac12 t^2 &\text{if $t\leq \alpha$}\\
\alpha (|t|-\alpha/2)&\text{if $t > \alpha$}
\end{cases}
\label{def_Huber}
\end{equation}
and $\alpha > 0$ is a tuning parameter. We remark that the Huber loss function $\rho_{\alpha}(t)$ in \eqref{def_Huber} is convex, differentiable, and when $\alpha$ goes to $\infty$, $\rho_{\alpha}(t)$
 converges to $t^2/2$. In addition, for a fixed $\alpha > 0$, $\rho'_{\alpha}(t)$, $\rho''_{\alpha}(t)$ are
all bounded. We use the convexity to prove Theorem \ref{thm:asymptotics} in Section~\ref{sec:consistency}. Intuitively, this implies that the impact of outlying observation $y_i$ will be
controlled and thus the corresponding statistical procedure of parameter estimation will be robust \citep{zhang2019robustness}.

\item \textbf{The matrix variate logistic regression model}. Here we use Equation \eqref{eq:model2} and apply the logit loss function for logistic regression \[l(y_i,f_i)=\log\left(1+\exp(f_i)\right)-y_if_i.\]
\end{itemize}
The penalty function $P$ in $F(\Cb,\gamma)$ of Equation \eqref{eq:optimization} can be chosen according to specific applications and the prior knowledge of the properties of $\bC^*$ and $\gamma^*$. For example, if $\bC^*$ is known to be elementwisely sparse, we may apply $P(\bC,\gamma)=\|\bC\|_1$. In this work, we leave the choice open for the theoretical analysis and use $P(\bC,\gamma)=\|\bC\|_1$ in our experiments. 

\section{Efficient Parameter Estimation Algorithms}

In this section, we develop an efficient algorithm for solving the  optimization problem in Equation \eqref{eq:optimization}. We derive
an alternating projected gradient descent algorithm that updates  $\bC$ and $\gamma$ alternatively. To incorporate the low-rank constraint on $\bC$, we apply the technique of projected gradient descent \citep{frank1956algorithm,bolduc2017projected}. 
Specifically, the general formula of the alternating projected gradient descent algorithm can be explicitly written as
\begin{align}
\Cb^{(\iter+1)}&=P_{r}\left(\Cb^{(\iter)}-\alpha^{(\iter)}_1 \frac{\partial}{\partial \Cb}F(\Cb,\gamma)\Big|_{\Cb=\Cb^{(\iter)},\gamma=\gamma^{(\iter)}}\right)\label{eq:update_C}\\
\gamma^{(\iter+1)}&=\gamma^{(\iter)}-\alpha^{(\iter)}_2 \frac{\partial}{\partial \gamma}F(\Cb,\gamma)\Big|_{\Cb=\Cb^{(\iter+1)},\gamma=\gamma^{(\iter)}},\label{eq:update_gamma}
\end{align}
where $\alpha^{(\iter)}_1$ and $\alpha^{(\iter)}_2$ are the update step sizes and $P_r$ is the projection to the nearest matrix of rank $r$, which can be obtained by the singular value decomposition. The partial derivatives $\frac{\partial}{\partial \Cb}F(\Cb,\gamma)$ and $\frac{\partial}{\partial \gamma}F(\Cb,\gamma)$ have explicit formulas as follows:
\begin{itemize}
    \item The ordinary matrix variate regression setting:
    \begin{align}
           \frac{\partial}{\partial \bC} F(\bC,\gamma)&=\frac{1}{n}\sum_{i=1}^n\bX_i(y_i-\langle \bX_i,\bC\rangle-\gamma^T\bz_i)+\lambda \frac{\partial}{\partial \bC} P(\bC,\gamma),\label{eq:F-derivative-C1} \\
    \frac{\partial}{\partial \gamma} F(\bC,\gamma)&=\frac{1}{n}\sum_{i=1}^n\bz_i(y_i-\langle \bX_i,\bC\rangle-\gamma^T\bz_i)+\lambda \frac{\partial}{\partial \gamma} P(\bC,\gamma)\label{eq:F-derivative-gamma1}
    \end{align}
    \item The robust matrix variate regression setting:
    \begin{align}
    \frac{\partial}{\partial \bC} F(\bC,\gamma)&=\frac{1}{n}\sum_{i=1}^n\bX_i\rho'_\alpha(y_i-\langle \bX_i,\bC\rangle-\gamma^T\bz_i)+\lambda \frac{\partial}{\partial \bC} P(\bC,\gamma),\label{eq:F-derivative-C2} \\
    \frac{\partial}{\partial \gamma} F(\bC,\gamma)&=\frac{1}{n}\sum_{i=1}^n\bz_i\rho'_\alpha(y_i-\langle \bX_i,\bC\rangle-\gamma^T\bz_i)+\lambda \frac{\partial}{\partial \gamma} P(\bC,\gamma)\label{eq:F-derivative-gamma2}
\end{align}

    \item The logistic matrix variate regression setting:
     \begin{align}
    \frac{\partial}{\partial \bC} F(\bC,\gamma)&=\frac{1}{n}\sum_{i=1}^n\bX_i\left(\frac{e^{\langle \bX_i,\bC\rangle+\gamma^T\bz_i}}{1+e^{\langle \bX_i,\bC\rangle+\gamma^T\bz_i}}-y_i\right)+\lambda \frac{\partial}{\partial \bC} P(\bC,\gamma),\label{eq:F-derivative-C3} \\
    \frac{\partial}{\partial \gamma} F(\bC,\gamma)&=\frac{1}{n}\sum_{i=1}^n\bz_i\left(\frac{e^{\langle \bX_i,\bC\rangle+\gamma^T\bz_i}}{1+e^{\langle \bX_i,\bC\rangle+\gamma^T\bz_i}}-y_i\right)+\lambda \frac{\partial}{\partial \gamma} P(\bC,\gamma)\label{eq:F-derivative-gamma3}
\end{align}
\end{itemize}

\RestyleAlgo{ruled}

\SetKwComment{Comment}{/* }{ */}
\begin{algorithm}
\caption{Regularized low-rank matrix variate regression}
\label{alg:matrix-reg-log}
\begin{flushleft}
{\bf Input:}  The samples $\{(y_i,\bX_i,\bz_i), i=1,\cdots,n\}$, initial $\bC^{(0)}$ and $\gamma^{(0)}$. \\
{\bf Output:} Estimated coefficients $\hat{\bC}$ and $\hat{\gamma}$.\\
{\bf Steps:}\\
{\bf 1:} Set step $\iter=0$.\\
{\bf 2:} Let $\bC^{(\iter+1)}=P_r(\bC^{(\iter)}-\alpha^{(\iter)}_1 \partial_\bC F(\bC^{(\iter)},\gamma^{(\iter)}))$ as in \eqref{eq:update_C}, where $\frac{\partial}{\partial \bC} F(\bC,\gamma)$ is defined in \eqref{eq:F-derivative-gamma1}, \eqref{eq:F-derivative-gamma2} and \eqref{eq:F-derivative-gamma3}, respectively, and $\alpha^{(\iter)}_1$ is chosen by a line search. \\
{\bf 3:} Let $\gamma^{(\iter+1)}=\gamma^{(\iter)}-\alpha^{(\iter)}_2\partial_\gamma F(\bC^{(\iter+1)},\gamma^{(\iter)})$ as in \eqref{eq:update_gamma}, where $\frac{\partial}{\partial \gamma} F(\bC,\gamma)$ is defined in \eqref{eq:F-derivative-gamma1}, \eqref{eq:F-derivative-gamma2} and \eqref{eq:F-derivative-gamma3}, respectively, and $\alpha^{(\iter)}_2$ is chosen by a line search. \\
{\bf 4:} Let $\bC^{(\iter+1)}=P_r(\bC^{(\iter+1)})$ where $P_r(\cdot)$ is the projection to the nearest matrix of rank $r$.\\
{\bf 5:} Set $\iter=\iter+1$.\\
{\bf 6:} Repeat steps 2-5 until $\|F(\bC^{(\iter)},\gamma^{(\iter)})-F(\bC^{(\iter-1)},\gamma^{(\iter-1)})\|_F \leq \eps_n$ where
$\eps_n$ is a pre-specified threshold, or the number of iterations exceeds the upper limit: $\iter > N^{(\max)}$.\\
{\bf Output:}  Set $\hat{\bC}=\bC^{(\iter)},\hat{\gamma}=\gamma^{(\iter)}$.
\end{flushleft}
\end{algorithm}

The update step size $\alpha$ is chosen by a line search \citep{wright1999numerical}. 
In particular, we apply the backtracking line search in Algorithm~\ref{alg:matrix-reg-log}, a search scheme based on the Armijo–Goldstein condition \citep{armijo1966minimization}, to find an optimal step size $\alpha_k$ as the direct line search is computationally expensive for our objective function. Starting with $\alpha=1$, we repeat $\alpha:=\beta \alpha$ with a fixed $\beta\in (0,1)$ until the objective value is smaller than the previous iteration, i.e.,
\begin{align}
    F(\bC^{(\iter+1)},\gamma^{(\iter)})&\leq F(\bC^{(\iter)},\gamma^{(\iter)})\nonumber\\ F(\bC^{(\iter+1)},\gamma^{(\iter+1)})&\leq F(\bC^{(\iter)},\gamma^{(\iter+1)}).
    \label{line_search_ineq}
\end{align}
 The parameter $\beta$ is often chosen to be between 0.1 and 0.8 in practice~\citep{boyd2004convex} and a smaller $\beta$ associated with a more crude search. In our simulations, we let $\beta=0.5$ in both step 2 and step 3.



The complete algorithm is summarized as Algorithm~\ref{alg:matrix-reg-log}. We remark that the line search rule in Algorithm~\ref{alg:matrix-reg-log} is well-defined, that is, for sufficiently small $\alpha^{(\iter)}_1$ and $\alpha^{(\iter)}_2$, we have nonincreasing objective values in \eqref{line_search_ineq} for $\bC^{(\iter+1)}$ and $\gamma^{(\iter+1)}$ defined in \eqref{eq:update_C} and \eqref{eq:update_gamma}.
Following the line search rule, the objective value is  monotonically nonincreasing, and  as a result, the algorithm is numerically stable and the convergence of the objective value $F(\bC^{(\iter)},\gamma^{(\iter)})$ is guaranteed. Therefore, the stopping criterion in Algorithm~\ref{alg:matrix-reg-log} is well-defined. This convergence property of  Algorithm~\ref{alg:matrix-reg-log} is summarized in Proposition~\ref{prop:convergence} with its proof provided in the Appendix. It states that the functional value converges and the accumulation points of the sequence $(\bC^{(\iter)},\gamma^{(\iter)})$ must be stationary points of the objective function.

\begin{proposition}\label{prop:convergence}
(a) The functional values $F(\bC^{(\iter)},\gamma^{(\iter)})$ are nonincreasing, and converge if $F$ is bounded from below.
(b) Each accumulation point of the sequence $(\bC^{(\iter)}, \gamma^{(\iter)})$ is a stationary point of the objective function $F(\bC,\gamma)$.
\end{proposition}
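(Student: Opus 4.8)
The plan is to derive part (a) almost directly from the line-search rule \eqref{line_search_ineq}, and to reduce part (b) to a single sufficient-decrease estimate for the projected-gradient step, with the bounded curvature of the rank-$r$ manifold being the crux of the whole argument.

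\textbf{Part (a).} The two inequalities in \eqref{line_search_ineq} state exactly that the $\bC$-update does not increase $F$ when $\gamma$ is held fixed, and the $\gamma$-update does not increase $F$ when $\bC$ is held fixed. Chaining them gives $F(\bC^{(\iter+1)},\gamma^{(\iter+1)}) \le F(\bC^{(\iter+1)},\gamma^{(\iter)}) \le F(\bC^{(\iter)},\gamma^{(\iter)})$, so the sequence of objective values is nonincreasing, and a nonincreasing sequence that is bounded from below converges by the monotone convergence theorem. The only nontrivial point is that the backtracking must actually terminate, i.e.\ a step size satisfying \eqref{line_search_ineq} must exist. For the $\gamma$-update this is the classical descent lemma: $F$ is $C^1$ in $\gamma$ with locally Lipschitz gradient, so $\gamma^{(\iter)}-\alpha\,\partial_\gamma F$ decreases $F$ for all sufficiently small $\alpha$. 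The $\bC$-update is where the difficulty lies.

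\textbf{The projected-gradient decrease (main obstacle).} Since $\bC^{(\iter)}$ lies on the manifold $\mathcal{M}_r=\{\bC\in\real^{m\times q}:\rank(\bC)=r\}$ and $\bC^{(\iter+1)}=P_r(\bC^{(\iter)}-\alpha\,\partial_\bC F)$ is the metric projection back onto this nonconvex set, I would treat $P_r$ as a first-order retraction and expand $F\circ P_r$ to second order at $\bC^{(\iter)}$. Writing $\operatorname{grad}_\bC F$ for the Riemannian gradient (the projection of $\partial_\bC F$ onto the tangent space $T_{\bC^{(\iter)}}\mathcal{M}_r$), the target estimate is of the form $F(\bC^{(\iter+1)},\gamma^{(\iter)}) \le F(\bC^{(\iter)},\gamma^{(\iter)})-c\,\alpha\,\|\operatorname{grad}_\bC F\|_F^2+C\,\alpha^2$, where the $O(\alpha^2)$ remainder is controlled uniformly by the bounded curvature of $\mathcal{M}_r$ and the local Lipschitz constant of $\partial_\bC F$. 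This is precisely the manifold ``second-order Taylor expansion'' advertised in the introduction and is the technical heart of the argument: it guarantees both that some small $\alpha$ produces a decrease (so the backtracking terminates, completing (a)) and, crucially, that the accepted step sizes are bounded below by a positive constant. The delicacy is that $P_r$ is a well-behaved retraction only on the smooth locus of $\mathcal{M}_r$---it becomes multivalued when the $r$-th and $(r+1)$-th singular values collide, and a limit of rank-$r$ iterates may have rank $<r$---so I would restrict to a neighborhood of the relevant point on which the singular-value gap, and hence the curvature bound, is uniform.

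\textbf{Part (b).} Let $(\bar\bC,\bar\gamma)$ be an accumulation point, with a subsequence $(\bC^{(\iter_k)},\gamma^{(\iter_k)})\to(\bar\bC,\bar\gamma)$. Since $F(\bC^{(\iter)},\gamma^{(\iter)})$ converges by part (a), the consecutive gaps $F(\bC^{(\iter)},\gamma^{(\iter)})-F(\bC^{(\iter+1)},\gamma^{(\iter+1)})$ tend to $0$. Feeding this into the sufficient-decrease inequality above, together with the analogous estimate for the $\gamma$-update and the positive lower bound on the line-search step sizes established in the previous paragraph, forces $\|\operatorname{grad}_\bC F\|_F\to 0$ and $\|\partial_\gamma F\|_2\to 0$ along the subsequence. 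Passing to the limit and using continuity of $\partial_\bC F$, of $\partial_\gamma F$, and of the tangent-space projection at a smooth point of $\mathcal{M}_r$, we conclude $\operatorname{grad}_\bC F(\bar\bC,\bar\gamma)=0$ and $\partial_\gamma F(\bar\bC,\bar\gamma)=0$, which are the first-order stationarity conditions of \eqref{eq:optimization}. The step I expect to require the most care is making the second-order expansion of $F\circ P_r$ rigorous and uniform, since both the termination of the line search in (a) and the translation of vanishing objective gaps into vanishing gradients in (b) depend on it.
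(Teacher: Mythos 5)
Your part (a) coincides with the paper's own argument: the paper's entire proof of (a) is the same chaining of \eqref{line_search_ineq} plus monotone convergence, and your additional remark that the backtracking must terminate is a detail the paper omits. Your scaffolding for part (b) — a descent estimate for the projected step, a positive lower bound on accepted step sizes, then continuity at the limit — is the standard way to make rigorous what the paper merely asserts in one sentence (that the line search rule implies the partial derivatives vanish at any limit point). You are also more careful than the paper on one point: under the rank constraint the correct stationarity condition is the vanishing of the projected (Riemannian) gradient $\operatorname{grad}_{\bC}F$, not of the full Euclidean partial derivative $\frac{\partial}{\partial \bC}F$, which is what the paper's proof claims.

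However, the key step of your part (b) fails as written, for two reasons. First, your decrease estimate has remainder $+C\alpha^2$. With accepted step sizes bounded \emph{below} by some $\alpha_{\min}>0$ (as you argue) and above by $1$, letting $\delta_{\iter}$ denote the objective gap at iteration $\iter$, the estimate only yields $\|\operatorname{grad}_{\bC}F\|_F^2\leq \delta_{\iter}/(c\,\alpha_{\min})+(C/c)\,\alpha_{\iter}$, which stays $O(1)$ rather than tending to $0$ when $\delta_{\iter}\to 0$; the remainder must scale as $C\alpha^2\|\operatorname{grad}_{\bC}F\|_F^2$ (as in the descent lemma) for the deduction to go through. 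Second, and more fundamentally, the acceptance rule \eqref{line_search_ineq} that the algorithm actually uses is \emph{mere} decrease, not Armijo sufficient decrease: backtracking stops at the first $\alpha\in\{1,\beta,\beta^2,\dots\}$ for which the objective does not increase, and such a step (for instance $\alpha=1$) can be accepted while producing an arbitrarily small decrease even though the gradient is large. Hence ``consecutive gaps tend to zero'' does not force ``gradients tend to zero'' along the subsequence unless the acceptance test is strengthened to $F(\bC^{(\iter+1)},\gamma^{(\iter)})\leq F(\bC^{(\iter)},\gamma^{(\iter)})-c'\,\alpha\,\|\operatorname{grad}_{\bC}F\|_F^2$ for some fixed $c'\in(0,1)$, with the analogous test for the $\gamma$-step. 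To be fair, the paper's own proof of (b) is a bare assertion that silently suffers from exactly the same problem; your attempt at least exposes where the real work lies, but as stated it does not close the gap.
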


Algorithm~\ref{alg:matrix-reg-log} requires the starting values of parameters $(\bC^{(0)},\gamma^{(0)})$. In the implementation, we let $\bC^{(0)}$ be a zero matrix and $\gamma^{(0)}$ be a zero vector, which works well in the simulations in Section~\ref{sec:simulation}. We discuss alternative initialization methods  with theoretical guarantees in Section~\ref{sec:convergence2}.
As for computational costs, the calculation of  remark that the calculation of $\frac{\partial}{\partial \gamma} F(\bC,\gamma)$ and $\frac{\partial}{\partial \bC} F(\bC,\gamma)$ in each iteration cost $O(n(mq+p))$, and the projection operator in \eqref{eq:update_C} has a computational cost of $O(rmq)$. As a result, the computational cost per iteration is in the order of $O((n+r)mq+np)$, which is comparable to the spectral regularized regression estimator (SRRE) method \citep{zhou2014regularized} and the double fused Lasso regularized
regression method \citep{li2021double}.

\section{Asymptotic Theory}\label{sec:consistency}
This section is devoted to analyzing the statistical consistency property of Algorithm~\ref{alg:matrix-reg-log}. First, in Section~\ref{sec:mainresults} we present our main results on the consistency of the proposed estimator in Theorem~\ref{thm:asymptotics} and the convergence property when the algorithm is well-initialized in Theorem~\ref{prop:convergence2}. Then we describe a sketch of the proof in Section~\ref{sec:skecth}, and leave the technical proofs to the Appendix.

\subsection{Consistency of the proposed estimator}\label{sec:mainresults}
We first present a few conditions for establishing the model consistency of the proposed estimator in \eqref{eq:optimization}.  

\textbf{Assumption 1}: There exists a positive constant $C_1>0$ such that
\[
\frac{1}{n}\Big\|\sum_{i=1}^n \mathrm{vec}(\bX_i,\bz_i)\mathrm{vec}(\bX_i,\bz_i)^T\Big\|\leq C_1,
\]
where $\mathrm{vec}(\bX_i,\bz_i)\in\reals^{qm+p}$ is a vector consisting of the elements in $\bX_i\in\reals^{m\times q}$ and $\bz_i\in\reals^p$.

\textbf{Assumption 2}: For $\bH(\bC,\gamma)\in\reals^{(qm+p)\times (qm+p)}$  defined by
\begin{equation}\label{eq:hessian}
\bH(\bC,\gamma)=\sum_{i=1}^nw_{2,i}\mathrm{vec}(\bX_i,\bz_i)\mathrm{vec}(\bX_i,\bz_i)^T,
\end{equation}
where
\begin{align*}
w_{2,i}=\begin{cases}2,\,\,\text{for the ordinary matrix variate regression model,}\\
I\Big(|y_i-\langle\bX_i,\bC\rangle-\bz_i^T\gamma|<\alpha\Big),\,\,\text{for the robust matrix variate regression model,}\\
e^{\langle\bX_i,\bC\rangle+\bz_i^T\gamma}/(1+e^{\langle\bX_i,\bC\rangle+\bz_i^T\gamma})^2,\,\,\text{for the logistic matrix variate regression model},\end{cases}
\end{align*}
and $\frac{1}{n}\bH(\bC,\gamma)$ is positive definite with eigenvalues bounded from below for all  $(\bC,\gamma)$ in a neighborhood of $(\bC^*,\gamma^*)$. Specifically, there exists positive constants $C_2>0$ and $c_0\leq \sigma_r(\bC^*)/2$ such that 
$
\frac{1}{n}\lambda_{\min}\Big(\bH(\bC,\gamma)\Big)\geq C_2
$ and for all $(\bC,\gamma)$ such that $$\dist((\bC,\gamma),(\bC^*,\gamma^*))=\sqrt{\|\bC-\bC^*\|_F^2+\|\gamma-\gamma^*\|^2}\leq c_0.$$

Assumptions 1 and 2 can be considered as the generalized version of Restricted Isometry condition in \cite{doi:10.1137/070697835} and are comparable to  \cite[Conditions 1,2,5,6]{li2021double}. In particular, Assumption 1 ensures that the sensing vectors $\mathrm{vec}(\bX_i,\bz_i)\in\reals^{qm+p}$ are not the same or in a similar direction, and it is satisfied if the sensing vectors are sampled from a distribution that is relatively uniformly distributed among all directions. Assumption 2 ensures that the Hessian matrix of $\sum_{i=1}^n l(y_i,\langle \Xb_i,\Cb\rangle+\gamma^T\zb_i)$, $\bH$, is not a singular matrix.

\begin{remark}
Assumptions 1 and 2 are reasonable when $n>O(pm+q)$ and $(\bX_i,\bz_i)$ is sampled from a reasonable distribution that does not concentrate around certain directions. For example, if $\mathrm{vec}(\bX_i,\bz_i)$ are i.i.d. sampled from a distribution of $N(0,\bI)$, then the standard concentration of measure results~\citep{wainwright_2019} imply that for the matrix variate regression model, $\bH=2\sum_{i=1}^n \mathrm{vec}(\bX_i,\bz_i)\mathrm{vec}(\bX_i,\bz_i)^T$, and the standard concentration of measure results~\citep{wainwright_2019} imply that Assumption 1 holds with a high probability when $n=O(qm+p)$ and Assumption 2 holds for the regression model as well since $\bH=2\sum_{i=1}^n \mathrm{vec}(\bX_i,\bz_i)\mathrm{vec}(\bX_i,\bz_i)^T$. Assumption 2 holds for the logistic regression model as long as  $a_i=\langle\bX_i,\bC\rangle+\bz_i^T\gamma$ is bounded above for most indices $i$ as $w_{2,i}\geq 1/(2e^{a_i})$, and holds for the robust regression model if  $\alpha$ is not too small such that there are sufficient samples (e.g., more than $2(qm+p)$ samples) such that $|y_i-\langle\bX_i,\bC\rangle-\bz_i^T\gamma|<\alpha$.
\end{remark}

\textbf{Assumption 3} [Assumption on the noise for the matrix variate regression model and the robust matrix variate regression model]: For the matrix variate regression model, the error $\epsilon_i$'s in Equation \eqref{eq:model1} follow an independent and identically distributed (i.i.d.) zero-mean and sub-Gaussian distributions with zero mean and variance one, i.e., $\Expect (\epsilon_i )= 0$ and $\Var (\epsilon_i )=\sigma^2$ to ensure that the distribution of outliers is limited as this model is not designed to handle outliers. For the robust matrix variate regression model, the errors $\epsilon_i$'s follow i.i.d. zero-mean and possibly heavy-tailed distributions.

With Assumptions 1-3, we establish the consistency of the proposed  estimator \eqref{eq:optimization} in Theorem~\ref{thm:asymptotics} that shows for large $n$, our estimator converges to the true underlying solution $(\bC^*,\gamma^*)$. This result holds for general penalty $P$ and regularization parameter $\lambda$.  A sketch of the proof is presented in Section~\ref{sec:skecth} and the detailed proof is deferred to the Appendix.


\begin{theorem}[Consistency of the proposed estimator]\label{thm:asymptotics}
Under Assumptions 1-3, let
\begin{align*}
\sigma_0=\begin{cases}\sigma,\,\,\text{for the matrix variate regression model}\\1,\,\,\text{for the robust matrix variate regression model}\\1,\,\,\text{for the logistic matrix variate regression model},\end{cases}
\end{align*}
then for 
\begin{align}
n\geq C \left(6\frac{C_1\sigma_0\sqrt{(r(qm+p)}}{C_2}+3\sqrt{\frac{\lambda P(\bC^*,\gamma^*)}{C_2}}\right)^2\Bigg/c_0^2
 \label{eq:assumption_asymptotics}
\end{align}
and for all $t\geq 2$, we have the following upper bound on the estimation error of the proposed estimator \eqref{eq:optimization} with probability at least $1-C\exp(-Cn)-C\exp(-Ct(r(q+m)+p))$:
\begin{equation}\label{eq:bound_consistency}
\dist((\hat{\bC},\hat{\gamma}),(\bC^*,\gamma^*))
\leq 6\frac{C_1 t \sigma_0\sqrt{(r(q+m)+p)} }{C_2\sqrt{n}}+\sqrt{\frac{6\lambda P(\bC^*,\gamma^*)}{C_2n}}.
\end{equation}
\end{theorem}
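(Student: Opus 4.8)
The plan is to read \eqref{eq:optimization} as a constrained $M$-estimation problem and run the classical ``basic inequality, local strong convexity, and score concentration'' argument, the one twist being that the feasible set is the non-convex rank-$r$ variety rather than a convex set. Write $\mathcal L(\bC,\gamma)=\sum_{i=1}^n l(y_i,\langle\bX_i,\bC\rangle+\gamma^T\bz_i)$ for the loss term, so the objective is $\mathcal L+\lambda P$, and set $\Delta=\mathrm{vec}(\hat\bC-\bC^*,\hat\gamma-\gamma^*)$, whose Euclidean norm is $\dist((\hat\bC,\hat\gamma),(\bC^*,\gamma^*))$. First I would record the basic inequality: since $(\hat\bC,\hat\gamma)$ minimizes the objective over the feasible set, $(\bC^*,\gamma^*)$ is feasible, and $P\ge 0$, we get $\mathcal L(\hat\bC,\hat\gamma)-\mathcal L(\bC^*,\gamma^*)\le\lambda P(\bC^*,\gamma^*)$. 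Next I would Taylor-expand $\mathcal L$ along the Euclidean segment from $(\bC^*,\gamma^*)$ to $(\hat\bC,\hat\gamma)$ --- legitimate because $\mathcal L$ is smooth on the ambient space even though the constraint is not --- obtaining $\mathcal L(\hat\bC,\hat\gamma)-\mathcal L(\bC^*,\gamma^*)=\langle g,\Delta\rangle+\tfrac12\Delta^T\bH\Delta$, where $g=\nabla\mathcal L(\bC^*,\gamma^*)$ and $\bH$ is the Hessian \eqref{eq:hessian} at an intermediate point. If that point lies in the $c_0$-ball of Assumption 2, then $\tfrac1n\lambda_{\min}(\bH)\ge C_2$ gives the quadratic lower bound $\tfrac{C_2 n}{2}\|\Delta\|^2$. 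The structural fact I would exploit throughout is that $\hat\bC$ and $\bC^*$ both have rank $r$, so the matrix block of $\Delta$ has rank at most $2r$; this is what eventually converts the ambient dimension $mq$ into the effective dimension $r(q+m)$.

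Combining the two displays gives $\tfrac{C_2 n}{2}\|\Delta\|^2\le\lambda P(\bC^*,\gamma^*)+|\langle g,\Delta\rangle|$, so everything reduces to controlling the score $g$. In each of the three models $g=\sum_i\xi_i\,\mathrm{vec}(\bX_i,\bz_i)$ is a sum of independent mean-zero terms: $\xi_i\propto\epsilon_i$ (sub-Gaussian) for ordinary regression, $\xi_i=\rho'_\alpha(\cdot)$ (bounded) for the robust model, and $\xi_i=p_i-y_i$ (bounded) for logistic regression, hence sub-Gaussian with parameter $O(\sigma_0)$ in all cases. Because the matrix block of $\Delta$ has rank $\le 2r$, I would bound $|\langle g,\Delta\rangle|\le\|g\|_{(2r)}\|\Delta\|$, where $\|g\|_{(2r)}=\sqrt{\sum_{j\le 2r}\sigma_j(g_{\bC})^2+\|g_{\gamma}\|^2}$ is the dual norm restricted to directions whose matrix part has rank $\le 2r$ (here $g_{\bC},g_{\gamma}$ are the matrix and vector blocks of $g$). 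A covering argument bounds this: for fixed unit $u,v$, the scalar $\sum_i\xi_i(u^T\bX_i v)$ is sub-Gaussian with proxy $\sigma_0^2\sum_i\langle\bX_i,uv^T\rangle^2\le\sigma_0^2 nC_1$ by Assumption 1, and a union bound over $\epsilon$-nets of metric entropy $O(r(q+m))$ for the rank-$2r$ block and $O(p)$ for $\gamma$ yields, for $t\ge 2$, $\|g\|_{(2r)}\lesssim t\,\sigma_0\sqrt{n}\,\sqrt{r(q+m)+p}$ (with the $C_1$-dependence of \eqref{eq:bound_consistency}) on an event of probability at least $1-C\exp(-Ct(r(q+m)+p))$. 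Substituting this into $\tfrac{C_2 n}{2}\|\Delta\|^2-\|g\|_{(2r)}\|\Delta\|-\lambda P(\bC^*,\gamma^*)\le 0$ and solving the resulting quadratic for $\|\Delta\|$ (with $\sqrt{a+b}\le\sqrt a+\sqrt b$) reproduces exactly the two-term bound \eqref{eq:bound_consistency}.

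The remaining gap is that the Taylor step needs the segment to stay inside the $c_0$-ball, which is not automatic. Here I would invoke the convexity of $l$ (hence of $\mathcal L$), flagged in the paper as the reason Theorem~\ref{thm:asymptotics} uses convexity. If $\|\Delta\|>c_0$, set $(\bC_\tau,\gamma_\tau)=(\bC^*,\gamma^*)+\tau\Delta$ with $\tau=c_0/\|\Delta\|\in(0,1)$; convexity of $\mathcal L$ together with the basic inequality gives $\mathcal L(\bC_\tau,\gamma_\tau)-\mathcal L(\bC^*,\gamma^*)\le\tau\lambda P(\bC^*,\gamma^*)\le\lambda P(\bC^*,\gamma^*)$. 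Since $\bC_\tau-\bC^*$ still has rank $\le 2r$ and sits at distance exactly $c_0$ (so the whole segment to it lies in the ball), running the quadratic lower bound and the score bound at $(\bC_\tau,\gamma_\tau)$ forces $\tfrac{C_2 n}{2}c_0^2\le\lambda P(\bC^*,\gamma^*)+\|g\|_{(2r)}c_0$, i.e.\ $c_0$ itself obeys \eqref{eq:bound_consistency}. But the sample-size hypothesis \eqref{eq:assumption_asymptotics} is calibrated precisely so that the right-hand side of \eqref{eq:bound_consistency} is strictly below $c_0$ --- a contradiction --- so in fact $\|\Delta\|\le c_0$ and the argument closes at $(\hat\bC,\hat\gamma)$ itself.

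The main obstacle I expect is the interface between the low-rank geometry and the probabilistic steps. First, the restricted score bound must be carried out by chaining over the rank-$2r$ variety so that only its intrinsic dimension $r(q+m)$, and not the ambient $mq$, enters $\|g\|_{(2r)}$; getting $\sqrt{r(q+m)+p}$ rather than $\sqrt{mq+p}$ is the crux that delivers the minimax rate, and everything else is routine quadratic-inequality bookkeeping. Second, the quadratic lower bound requires $\tfrac1n\lambda_{\min}(\bH)\ge C_2$ to hold uniformly over the $c_0$-ball; for deterministic designs this is Assumption 2, while for random designs one must show it concentrates, which is where the extra $C\exp(-Cn)$ factor in the probability enters (as sketched in the Remark). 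Handling the robust model needs slightly more care, since there $w_{2,i}$ is an indicator and one must guarantee enough samples have residual below $\alpha$ across the whole neighborhood.
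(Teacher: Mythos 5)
Your proof is correct in its essentials, but it takes a genuinely different route from the paper's. The paper builds its argument on manifold machinery: it proves a generalized second-order Taylor expansion for functions restricted to a manifold with curvature parameter $(c_0,C_T)$ (Lemma~\ref{lemma:general_manifold}), computes the curvature of the fixed-rank manifold at $\bC^*$ to be $C_T=2/\sigma_r(\bC^*)$ (Lemma~\ref{lemma:curvature}), and then splits the score into its projection onto the tangent space $T_{(\bC^*,\gamma^*)}(\calM)$ (whose dimension $\approx r(q+m)+p$ yields the improved rate via a McDiarmid-type concentration bound, Lemma~\ref{lemma:projection_randomvector}) and the normal component, which is full-dimensional but enters only at second order through the factor $C_Tb^2$. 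You dispense with all of this: you exploit the purely algebraic fact that $\mathrm{rank}(\hat\bC-\bC^*)\le 2r$, bound the score term by a restricted dual norm $\|g\|_{(2r)}$ (von Neumann's trace inequality), and control that norm by a covering argument over the rank-$2r$ unit ball — the standard Negahban--Wainwright-style M-estimation template. Both proofs handle the region outside the $c_0$-ball identically, via convexity of the loss and a point on the segment at distance $c_0$. What each approach buys: yours is more elementary, requires no curvature estimate, and in particular never invokes $\sigma_r(\bC^*)$ beyond its implicit appearance in Assumption~2's choice of $c_0$; the paper's lemma, by contrast, is a standalone tool that applies to any manifold of bounded curvature (which the paper advertises as a separate technical contribution, and which is reused in the proof of Theorem~\ref{prop:convergence2}, where the iterates lie on the manifold and the rank-subadditivity trick is less natural). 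Two shared loosenesses, not gaps specific to you: (i) the contradiction step needs the right-hand side of \eqref{eq:bound_consistency} to fall below $c_0$, which for large $t$ does not follow from \eqref{eq:assumption_asymptotics} — the paper's own proof quietly imposes the analogous $t$-dependent condition $C_2\sqrt{n}\ge 6C_1t\sigma_0\sqrt{qm+p}$; and (ii) the Huber loss is not $C^2$, so the Lagrange form of the Taylor remainder should be replaced by the integral form, an issue equally present in the paper's Lemma~\ref{lemma:general_manifold}.
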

\subsection{Discussion of Theorem~\ref{thm:asymptotics}}
\hfill

\textbf{Comparison of the convergence rate with existing works.}
The main result, Inequality \eqref{eq:bound_consistency}, shows that the estimation error is in the order of \begin{equation}\label{eq:error_order}O_P\left(\sqrt{\frac{r(q+m)+p}{n}}+\sqrt{\frac{\lambda P(\bC^*,\gamma^*)}{n}}\right).\end{equation} In comparison, the rate in the existing work \citep{li2021double} is $$O_P\left(\sqrt{\frac{qm+p}{n}}+\sqrt{\frac{\lambda P(\bC^*,\gamma^*)}{n}}\right).$$ 
Our result improves the factor $mq + p$ to $r(q+m)+p$, which is a significant improvement when the rank $r$ is smaller than $q$ and $m$. This improvement can be explained by the fact that by fixing the rank constraint, our estimator has $r(q+m)+p$ parameters which are fewer than the $qm+p$ parameters in the estimator of \cite{li2021double}. With fewer parameters to estimate, our estimator achieves a better convergence rate, and this improvement is also clear from our simulation experiments. $O_P$ and $o_P$ represent the standard big and small O notation in probability.

\textbf{Comparison with minimax rate.}  We derive our minimax analysis in Theorem \ref{prop:convergence2}, which shows that under the setting that $\bC^*$ is low-rank, the minimax rate of estimating $(\bC,\gamma)$ is in the order of $O_P\left(\sqrt{\frac{r(q+m)+p}{n}}\right)$. In comparison, our rate in Theorem~\ref{thm:asymptotics}  achieves this rate. In literature, \citet[Remark 10]{luo2020recursive} shows that the minimax rate is in the order of $O_P\left(\sqrt{\frac{r(q+m)}{n}}\right)$ when the vector covariates $\gamma$ does not exist and $p=0$ and can be considered as a special case of our result. In the future, we plan to derive an improved minimax rate of our model under the setting that $\bC^*$ is both sparse and low-rank based on the technique of \cite{she2021analysis}, and show that our estimator with a well-chosen $\lambda\neq 0$ and $P(\bC,\gamma)=\|\bC\|_1$ achieves  the improved rate. 

\textbf{MLE argument when $\lambda=0$.} In addition, the adaption of the usual arguments of MLE to manifold optimization can be applied when $\lambda=0$, in which we may show that $\sqrt{n}(\tilde{\bC}-\bC,\tilde{\gamma}-\gamma)$ converges to a Gaussian-like distribution (the result is similar to \citet[Theorem 3.1]{HungWang2012}), where the covariance/Fisher information matrix can be obtained following the technique of \cite{le1990maximum} and \cite{6418045}. However, we skip the rigorous statement here as the we use $\lambda>0$ in practice.

\textbf{A generic choice of $P$.}  In this work, we leave the choice of the penalty $P$ open for the theoretical analysis. However, there are natural choices of $P$ in certain applications. For example, when we have the prior knowledge that $\bC^*$ is sparse, then we may let $P(\bC,\gamma)=\|\bC\|_1$.  

\textbf{Choice of rank $r$} The choice of $r$ has a large impact on the solution. In fact, the objective value of \eqref{eq:optimization} is nonincreasing as $r$ increases, since the set of matrix of rank $r_2$ contains the set of matrices of rank $r_1$ when $r_2>r_1$. We expect that a choice of $r$ can be obtained using either cross validation, or the elbow method \citep{choi2017selecting} that chooses the rank such that a larger rank doesn't give much smaller objective value.

\subsection{Convergence of the proposed algorithm}
In this section, we show that the output of Algorithm~\ref{alg:matrix-reg-log} is linearly convergent and sufficiently close to the ground truth with a good initialization and an additional assumption on the second order information of the ``Hessian matrix'' $\bH$ in Assumption 2.

\textbf{Assumption 4}: There exists a positive constant $C_3>0$ such that $\frac{1}{n}\Big\|\bH(\bC,\gamma)\Big\|\leq C_3$ for all $(\bC,\gamma)$ such that $\dist((\bC,\gamma),(\bC^*,\gamma^*))=\sqrt{\|\bC-\bC^*\|_F^2+\|\gamma-\gamma^*\|^2}\leq c_0.$
Combining it with Assumption 2, it suggests that $\bH$ is a well-conditioned matrix. Hence, Assumption 4 can be considered as the generalized version of the Restricted Isometry condition in \cite{doi:10.1137/070697835} and comparable to  \citet[Conditions 2,6]{li2021double}. 

Assumptions 4 is reasonable when $n>O(pm+q)$ and $(\bX_i,\bz_i)$ is sampled from a reasonable distribution that does not concentrate around certain directions. For example,  if $\mathrm{vec}(\bX_i,\bz_i)$ are i.i.d. sampled from a distribution of $N(0,\bI)$, then the standard concentration of measure results~\citep{wainwright_2019} imply that Assumption 2 holds for all three models with a high probability, since $w_{2,i}$ are bounded above.
With Assumption 4, we have the following result showing the convergence of Algorithm~\ref{alg:matrix-reg-log}, with its proof deferred to the Appendix. It shows that with a good initialization, all accumulation points have estimation errors converging to zero as $n\rightarrow\infty$.

\begin{theorem}[Convergence of the proposed algorithm]\label{prop:convergence2}
Under Assumptions 1-4, let $c_0'=\min\left(c_0,\frac{C_2}{4C_TC_3}\right)$, 
the initialization $(\bC^{(0)},\gamma^{(0)})$ is ``good'' in the sense that it is close to $(\bC^*,\gamma^*)$, the objective value $F(\bC^{(0)},\gamma^{(0)})$ is not large: 
\begin{align}\nonumber
\dist(({\bC}^{(0)},{\gamma}^{(0)}),(\bC^*,\gamma^*))&\leq c_0',\\
F(\bC^{(0)},\gamma^{(0)})-\sum_{i=1}^n l(y_i,\langle \Xb_i,\Cb^*\rangle+\gamma^{*T}\zb_i)&\leq \frac{C_2n}{2}c_0'^2,\label{eq:convergence_init}
\end{align}
and $n$ is large: $\sqrt{n}>\frac{4C_TC_1\sigma_0\sqrt{qm+p} }{C_2}.$
Then for all $t\geq 2$, with a probability at least $1-C\exp(-Cn)-C\exp(-t(r(q+m)+p))$, all accumulation points of the iterates $\{(\bC^{(\iter)}, \gamma^{(\iter)})\}_{\iter\geq 1}$, denoted by $(\tilde{\bC},\tilde{\gamma})$, have small estimation errors:
\begin{equation}\label{eq:stationary}
\dist((\tilde{\bC},\tilde{\gamma}),(\bC^*,\gamma^*))\leq \frac{4C_1t\sigma_0\sqrt{r(q+m)+p} }{C_2\sqrt{n}}+\frac{4\lambda C_{partial}}{nC_2},
\end{equation}
 where $C_{partial}$ is the upper bound of the magnitude of the derivative of the penalty function within a neighborhood of the ground truth:
$$C_{partial}=\max_{\{\dist((\bC,\gamma),(\bC^*,\gamma^*))\leq c_0'\}}\sqrt{\Big\|\frac{\partial}{\partial \bC}P(\bC,\gamma)\Big\|_F^2+\Big\|\frac{\partial}{\partial \gamma}P(\bC,\gamma)\Big\|^2}.$$
\end{theorem}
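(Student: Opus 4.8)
The plan is to combine the abstract convergence guarantee of Proposition~\ref{prop:convergence} with a local strong-convexity argument adapted to the rank-$r$ manifold, so that every accumulation point is forced to lie close to $(\bC^*,\gamma^*)$. The proof proceeds in three stages: (i) establish an invariant region that the iterates never leave; (ii) characterize an accumulation point as a Riemannian stationary point and convert stationarity into a quadratic inequality for the estimation error; (iii) bound the stochastic gradient term by a concentration inequality taken uniformly over the manifold. Throughout, the constraint $c_0\le\sigma_r(\bC^*)/2$ from Assumption 2 ensures that iterates near the truth are genuinely rank-$r$, i.e. on the smooth part of the manifold, so the tangent/normal decomposition and curvature estimates below are available.

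First I would set up the invariant region. By Proposition~\ref{prop:convergence}(a) the objective values are nonincreasing, so every iterate satisfies $F(\bC^{(\iter)},\gamma^{(\iter)})-\sum_i l(y_i,\langle\bX_i,\bC^*\rangle+\gamma^{*T}\bz_i)\le \frac{C_2 n}{2}c_0'^2$ by the initialization hypothesis \eqref{eq:convergence_init}. Writing $L$ for the loss part of $F$, its Hessian is exactly $\bH(\bC,\gamma)$, so inside the radius-$c_0$ ball Assumption 2 gives $\nabla^2 L\succeq C_2 n\, I$ and hence the quadratic lower bound $L(\bC,\gamma)-L(\bC^*,\gamma^*)\ge \langle g^*,\delta\rangle+\frac{C_2 n}{2}\dist^2$, where $g^*=\nabla L(\bC^*,\gamma^*)$ and $\delta$ is the ambient displacement. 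Since $\lambda P\ge 0$, combining the sublevel-set bound with this growth and ruling out escape by a continuity/induction argument confines all iterates to $\dist((\bC^{(\iter)},\gamma^{(\iter)}),(\bC^*,\gamma^*))\le c_0'$; the sample-size condition $\sqrt n>4C_TC_1\sigma_0\sqrt{qm+p}/C_2$ is precisely what guarantees that $\|g^*\|$ (controlled at the ambient dimension $qm+p$ in the worst case) cannot push the iterate out of the ball. In particular every accumulation point also lies in the good region.

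Next I would exploit stationarity. By Proposition~\ref{prop:convergence}(b) an accumulation point $(\tilde\bC,\tilde\gamma)$ is stationary, meaning $\nabla F(\tilde\bC,\tilde\gamma)$ has zero projection onto the tangent space of the rank-$r$ manifold at $\tilde\bC$ (and zero $\gamma$-component). Let $\delta=(\tilde\bC-\bC^*,\tilde\gamma-\gamma^*)$ with tangential and normal parts $\delta_T,\delta_N$. Testing stationarity against $\delta_T$ and inserting the ambient expansion $\nabla L(\tilde\bC,\tilde\gamma)=g^*+\bar\bH\delta$, where $\frac1n\bar\bH$ has eigenvalues in $[C_2,C_3]$ by Assumptions 2 and 4, gives
$$
0=\langle g^*+\bar\bH\delta+\lambda\nabla P(\tilde\bC,\tilde\gamma),\,\delta_T\rangle .
$$
Now $\langle\bar\bH\delta,\delta_T\rangle=\langle\bar\bH\delta,\delta\rangle-\langle\bar\bH\delta,\delta_N\rangle\ge C_2 n\,\dist^2-C_3 n\,\dist\,\|\delta_N\|$, and the key geometric fact for the rank-$r$ manifold is that the normal component is quadratically small, $\|\delta_N\|\le C_T\,\dist^2$ (the bounded-curvature/second-fundamental-form estimate, with $C_T$ of order $1/\sigma_r(\bC^*)$; this is where $C_T$ enters). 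Inside the radius $c_0'\le C_2/(4C_TC_3)$ the curvature correction $C_3 C_T n\,\dist^3$ is absorbed into at most $\frac14 C_2 n\,\dist^2$, leaving $\frac34 C_2 n\,\dist^2\le(\|P_T g^*\|+\lambda C_{partial})\,\dist$, hence $\dist\lesssim(\|P_T g^*\|+\lambda C_{partial})/(C_2 n)$.

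Finally I would bound $\|P_T g^*\|$. Because $\delta_T$ lives in a tangent space of effective dimension $r(q+m)-r^2+p=O(r(q+m)+p)$, the relevant quantity is the projection of the (sub-Gaussian, by Assumption 3) loss gradient onto an $O(r(q+m)+p)$-dimensional subspace; a covering argument over the manifold together with Assumption 1 yields $\|P_T g^*\|\le C_1 t\sigma_0\sqrt{n(r(q+m)+p)}$ with probability at least $1-C\exp(-Cn)-C\exp(-t(r(q+m)+p))$, and substituting gives exactly \eqref{eq:stationary}. The main obstacle is the pair formed by the normal-component estimate in stage (ii) and the concentration in stage (iii): one must (a) make the ``second-order Taylor expansion on a manifold with bounded curvature'' rigorous and certify $\|\delta_N\|\le C_T\,\dist^2$ uniformly over the region, and (b) establish the concentration of $g^*$ uniformly over the tangent spaces of all nearby rank-$r$ matrices, which is what collapses the dimension from $qm+p$ to $r(q+m)+p$ and is exactly what is needed to attain the minimax rate.
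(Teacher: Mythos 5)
Your proposal is correct in outline and shares two of its three stages with the paper's proof: the invariant-region argument (monotone objective values from Proposition~\ref{prop:convergence}(a) plus the quadratic growth supplied by Assumption 2) is essentially identical to the paper's opening step, and the concentration bound collapsing the dimension from $qm+p$ to $r(q+m)+p$ is exactly the paper's inequality \eqref{eq:concentration}, proved via Lemma~\ref{lemma:projection_randomvector}. Where you genuinely diverge is the middle step. The paper argues \emph{dynamically}: it shows that whenever the current iterate's distance $b$ to $(\bC^*,\gamma^*)$ exceeds the threshold on the right-hand side of \eqref{eq:stationary}, the tangential component of the gradient pointing toward the truth dominates the curvature, normal-gradient, and penalty terms (condition \eqref{eq:theorem2_conditions0}), so one projected-gradient step cannot increase the distance; accumulation points are therefore trapped below the threshold. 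You argue \emph{statically}: invoke Proposition~\ref{prop:convergence}(b) to get Riemannian stationarity of the accumulation point, expand $\nabla L(\tilde\bC,\tilde\gamma)=g^*+\bar\bH\delta$, and read the error bound off the first-order condition tested against the tangential displacement. Your route is cleaner in that it never has to reason about step sizes, the line search, or the retraction $P_r$ (which the paper itself largely glosses over); its cost is that it leans on Proposition~\ref{prop:convergence}(b), whose own proof in the paper is sketchy, and it creates the two technical burdens you flag at the end.

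Two cautions if you carry this out. First, there is a tangent-space mismatch in your stage (ii): stationarity gives you $\Pi_{T_{\tilde\bx}(\calM)}\nabla F(\tilde\bx)=0$ at the \emph{accumulation point}, while Lemma~\ref{lemma:curvature} bounds the normal component relative to the tangent space at $(\bC^*,\gamma^*)$. You must either re-derive the curvature estimate at $\tilde\bC$ (this works, since $\sigma_r(\tilde\bC)\geq\sigma_r(\bC^*)-c_0'\geq\sigma_r(\bC^*)/2$ inside the ball, at the price of replacing $C_T$ by $2C_T$ and adjusting $c_0'$ accordingly) or control the gap between $\Pi_{T_{\tilde\bx}(\calM)}$ and $\Pi_{T_{\bx^*}(\calM)}$, which is of order $\dist/\sigma_r(\bC^*)$. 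Second, because your tangent space sits at the data-dependent point $\tilde\bx$, your concentration step needs to hold uniformly over tangent spaces of nearby rank-$r$ matrices (a covering or chaining argument), whereas the paper's contraction route only ever projects onto the fixed, deterministic tangent space at $(\bC^*,\gamma^*)$ and so can apply Lemma~\ref{lemma:projection_randomvector} directly. Neither issue is fatal, but both must be addressed for your version to be complete.
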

\begin{remark}
Unlike Theorem~\ref{thm:asymptotics}, Theorem~\ref{prop:convergence2} does not depend on the convexity of the loss function $l$. As a result, it applies to various choices of $l$, including popular loss function functions in robust statistics such as redescending $\psi$’s, Hampel’s loss, or Tukey’s bisquare, etc.~\citep{10.1214/aoms/1177703732,maronna2018robust,she2017robust,huang2020robust} that can detect outliers with moderate or high leverages. 
\end{remark}

\subsection{Minimax rate}\label{sec::minimax}
In this section, we prove the minimax lower bounds
showing that the rates attained by our estimator are optimal, based on the arguments in \cite{tsybakov2008introduction}. We consider the
class of parameters
\[
\mathcal{A}(r,a)=\{(\bC,\gamma)\in\reals^{q\times p}\times \reals^p: \mathrm{rank}(\bC)\leq r, \|\bC\|_F^2+\|\gamma\|^2\leq 1\}.
\]
We will need the following assumptions:\\
\textbf{Assumption 5:} There exists $C_{upper}>0$ such that for all $(\bC,\gamma)\in \mathcal{A}(r,a)$,
\[
 \sum_{i=1}^n\Big(\langle\bX_i,\bC\rangle+\gamma^T\bz_i\Big)^2\leq  C_{upper}n\|\mathrm{vec}(\bC,\gamma)\|^2.
\]
\textbf{Assumption 6:} There exists $c_{\epsilon}>0$ such that for all $x\in\reals$,
$
KL(P_{\epsilon,0},P_{\epsilon,x})\leq c_{\epsilon}x^2,
$
where $P_{\epsilon,x}$ represent the distribution of  $\epsilon_i+x$ for the regression model and the robust regression model, and $P_{\epsilon,x}$ represent the Bernoulli distribution with parameter $x$ for the logistic regression model. In addition, $KL$ represents the Kullback-Leibler divergence: For distributions $P$ and $Q$ of a continuous random variable with probability density functions $p(x)$ and $q(x)$, it is defined to be the integral $KL(P,Q)=\int_{x}p(x)\log(p(x)/q(x))\di x$.

Assumption 5 is less restrictive of Assumption 1 as it only need to be true for all $(\bC,\gamma)\in\mathcal{A}(r,a)$, and Assumption 6 holds under the logistic regression model, and under the regression and robust regression models, Assumption 6 holds for  zero-mean, symmetric distributions with tails decaying not faster than Gaussian, including Gaussian distribution, exponential distribution, Cauchy distribution, and Student's t distribution.
Now we are ready to state our main result on the minimax rate. Combining it with convergence rate in Theorem~\ref{thm:asymptotics}, it implies that the rate of our estimator is optimal. 
\begin{theorem}\label{thm:minimax}
Assuming that $n\geq r(m+q)+p$, then for any $0<\beta<\frac{2^{(r(m+q)+p)/4}}{1+2^{(r(m+q)+p)/4}}$, there exists $c_0>0$ depending on $\beta, c_{\epsilon}$ and $C_{upper}$ such that
\[
\inf_{\hat{\bC},\hat{\gamma}}\sup_{(\bC^*,\gamma^*)\in \mathcal{A}(r,a)}\Pr\left(\dist\Big((\bC^*,\gamma^*),(\hat{\bC},\hat{\gamma})\Big)\geq c_0 \sqrt{\frac{r(m+q)+p}{n}} \right)\geq \beta.
\]
\end{theorem}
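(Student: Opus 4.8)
\section*{Proof proposal for Theorem~\ref{thm:minimax}}

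The plan is to establish the lower bound by the classical information-theoretic reduction from estimation to multiple hypothesis testing, i.e. Fano's method in the form presented in \cite{tsybakov2008introduction}. Writing $d=r(m+q)+p$ for the effective dimension, I would exhibit a finite family of parameters $\{(\bC_0,\gamma_0),\dots,(\bC_M,\gamma_M)\}\subset\mathcal{A}(r,a)$ that are simultaneously well-separated in the $\dist$ metric, with pairwise separation of the target order $\delta\asymp\sqrt{d/n}$, and statistically indistinguishable, in the sense that the induced laws of the data $\{y_i\}_{i=1}^n$ have small pairwise Kullback--Leibler divergence. The cardinality $M$ will be exponential in $d$; Fano's inequality then forces any estimator to err with probability bounded below, and the stated numerical threshold $\beta<2^{d/4}/(1+2^{d/4})$ emerges after substituting this cardinality into the Fano bound of the form $M/(1+M)$ (up to the usual lower-order correction).

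First I would build the packing set. The obstacle here is the rank constraint: one cannot freely vary all $mq$ entries of $\bC$, so a low-rank--respecting parametrization carrying roughly $r(m+q)$ degrees of freedom is needed. I would therefore combine a family encoding variation of an $m\times r$ factor with a family encoding variation of an $r\times q$ factor, so that every matrix in the family has rank at most $r$, together with a family varying the $p$ coordinates of $\gamma$. Applying the Varshamov--Gilbert bound (e.g. \citet[Lemma~2.9]{tsybakov2008introduction}) to the associated Hamming cube of dimension $\asymp d$ produces a subset of cardinality $M$ with $\log M\asymp d$ whose members are pairwise at Hamming distance $\gtrsim d$. Scaling each binary codeword by a common factor $s$ converts Hamming distance into squared $\dist$-distance, so the pairwise separation is $\gtrsim s^2 d$.

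Next I would fix the scale $s$. Membership in $\mathcal{A}(r,a)$ requires $\|\bC_j\|_F^2+\|\gamma_j\|^2\leq 1$, which holds provided $s^2 d\lesssim 1$; this is precisely where the hypothesis $n\geq d$ enters, since I will take $s\asymp 1/\sqrt{n}$ and then $s^2 d\asymp d/n\leq 1$. With this choice the separation is $\gtrsim s^2 d\asymp d/n$, i.e. $\dist\gtrsim\sqrt{d/n}$, matching the claimed rate. For the indistinguishability, independence of the samples tensorizes the divergence, and Assumption~6 gives $KL(P_{\epsilon,0},P_{\epsilon,x})\leq c_\epsilon x^2$ coordinatewise (uniformly across the regression, robust, and logistic models, since $P_{\epsilon,x}$ is defined separately for each), so
\begin{equation*}
KL\big(P_{(\bC_j,\gamma_j)},P_{(\bC_k,\gamma_k)}\big)\leq c_\epsilon\sum_{i=1}^n\big(\langle\bX_i,\bC_j-\bC_k\rangle+(\gamma_j-\gamma_k)^T\bz_i\big)^2 .
\end{equation*}
Assumption~5 then bounds the right-hand side by $c_\epsilon C_{upper}\, n\,\|\mathrm{vec}(\bC_j-\bC_k,\gamma_j-\gamma_k)\|^2\leq c_\epsilon C_{upper}\, n\, s^2 d$, and with $s^2\asymp 1/n$ this is $O(d)$, hence at most $\alpha\log M$ for a suitably small constant $\alpha$ once $c_\epsilon$ and $C_{upper}$ are absorbed into the choice of $s$.

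Finally, feeding the separation radius $\delta\asymp\sqrt{d/n}$, the cardinality $M$ with $\log M\asymp d$, and the averaged divergence bound $\tfrac{1}{M}\sum_j KL(P_j,P_0)\leq\alpha\log M$ into the Fano-type inequality of \cite{tsybakov2008introduction} yields a lower bound on the minimax testing error, from which the stated bound follows after adjusting $c_0$ to absorb $c_\epsilon$, $C_{upper}$, and the Varshamov--Gilbert constant. I expect the main difficulty to be the explicit low-rank packing construction: ensuring that the family genuinely realizes $\Theta(r(m+q))$ independent directions while every element simultaneously keeps rank at most $r$ and Frobenius norm at most $1$, and that the factor families can be combined without the rank accidentally exceeding $r$ or the separation degrading. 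The probabilistic steps (the Varshamov--Gilbert lemma and the KL tensorization) are then routine.
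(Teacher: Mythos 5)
Your high-level strategy is the same as the paper's: reduce estimation to multiple hypothesis testing via \cite[Theorem~2.5]{tsybakov2008introduction}, place the matrix part of each hypothesis in the first $r$ columns so the rank constraint holds automatically, and bound the Kullback--Leibler divergences by combining Assumptions~5 and~6. The genuine difference is the packing step, and it matters. Writing $d=r(m+q)+p$, the paper uses the \emph{full} sign hypercube $\{\pm s\}^{rm+p}$ with per-entry scale $s=c_0\sqrt{d/n}$, whose minimum separation is only $2s$ (two codewords may differ in a single coordinate); you instead extract a Varshamov--Gilbert subfamily of cardinality $2^{c(rm+p)}$ with Hamming separation $\gtrsim rm+p$, take the smaller per-entry scale $s\asymp 1/\sqrt{n}$, and obtain the separation $\asymp\sqrt{d/n}$ from the Hamming distance rather than from the entry scale. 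Your route is the one that actually closes with constants independent of dimension: keeping the factor $n$ that Assumption~5 carries, each full-cube hypothesis satisfies $\mathrm{KL}(P_{0},P_{(\bC,\gamma)})\leq c_\epsilon C_{upper}\,n(rm+p)s^2\asymp c_\epsilon C_{upper}c_0^2d^2$ while $\log M\asymp d$, so Tsybakov's requirement $\mathrm{KL}\leq\alpha\log M$ with $\alpha<1/8$ would force $c_0\lesssim 1/\sqrt{d}$ and destroy the dimension factor in the rate; moreover the full-cube hypotheses have $\|\bC\|_F^2+\|\gamma\|^2=(rm+p)s^2\asymp c_0^2d^2/n$, which under the stated assumption $n\geq d$ need not be $\leq 1$, i.e., they can fall outside $\mathcal{A}(r,a)$. (The paper's displayed KL bound silently drops the factor $n$ from Assumption~5, which is why neither issue surfaces there.) Your scaling $s\asymp 1/\sqrt{n}$ keeps the norm ($\asymp d/n\leq 1$, exactly where $n\geq d$ enters, as you note) and the divergence ($\asymp d\asymp\log M$) under control simultaneously --- precisely what the Varshamov--Gilbert step buys.

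Two corrections to your plan. First, abandon the two-factor packing (varying an $m\times r$ factor and an $r\times q$ factor): the map $(\bA,\bB)\mapsto\bA\bB$ does not preserve Hamming separation (distinct factor pairs can yield the same or nearly the same product, e.g.\ $\bA\bB=(\bA\bD)(\bD^{-1}\bB)$ for invertible $\bD$), and it is unnecessary --- the linear embedding $\bC=[\bC',0]$ with $\bC'\in\reals^{m\times r}$ and (WLOG) $m\geq q$, which is exactly the paper's construction, already supplies $rm\geq r(m+q)/2$ free coordinates inside the rank-$\leq r$ set, so the ``main difficulty'' you anticipate dissolves; it also lets you bound only the divergences $\mathrm{KL}(P_0,P_j)$ with the zero codeword as reference, so Assumption~5 is applied to members of $\mathcal{A}(r,a)$ rather than to differences of rank up to $2r$. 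Second, the Varshamov--Gilbert cardinality is about $2^{(rm+p)/8}$, so the probability threshold coming out of Tsybakov's bound is $\sqrt{M}/(1+\sqrt{M})\approx 2^{(rm+p)/16}/\bigl(1+2^{(rm+p)/16}\bigr)$ (note the prefactor is $\sqrt{M}/(1+\sqrt{M})$, not $M/(1+M)$ as you wrote), which is strictly smaller than the stated $2^{d/4}/(1+2^{d/4})$; your argument therefore proves the theorem only for $\beta$ in this narrower, though still exponentially-close-to-one, range. That stated constant is an artifact of the paper's full-cube cardinality, which, as explained above, is not compatible with the separation the rate requires.
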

\begin{proof}
WLOG assume that $m\geq q$. Let $\theta=(\bC,\gamma)$, and 
\[
\mathcal{C}=\Bigg\{(\bC,\gamma):\bC=[\bC',0]\,\, \text{where}\,\, \bC'\in\reals^{m\times r}\,\, \text{and}\,\, 0\in\reals^{m\times (q-r)},  \bC'_{ij}\in\{s,-s\}, \gamma_k\in\{s,-s\}\Bigg\},
\]
where $s=c_0\sqrt{\frac{r(m+q)+p}{n}}$. Then we have $|\mathcal{C}|=2^{rm+p}$, and for any $(\bC_1,\gamma_1), (\bC_2,\gamma_2)\in\mathcal{C}$, $\dist((\bC_1,\gamma_1), (\bC_2,\gamma_2))\geq 2s$. In addition, for any $(\bC,\gamma)\in\mathcal{C}$ and $P_0$ represents the model when $\bC=0$ and $\gamma=0$, 
\[
K(P_{0},P_{(\bC,\gamma)})\leq c_{\epsilon}\sum_{i=1}^n\Big(\langle\bX_i,\bC\rangle+\gamma^T\bz_i\Big)^2\leq c_{\epsilon}C_{upper}(\|\bC\|_F^2+\|\gamma\|^2)\leq c_{\epsilon}C_{upper} (r(m+q)+p)s^2.
\]
Applying \cite[Theorem 2.5]{tsybakov2008introduction} and note that $\log|\mathcal{C}|=2^{rm+p}=(rm+p)\log 2$, we may choose $
\alpha={2c_{\epsilon}C_{upper}}{s^2}/\log 2$, and $\alpha$ can be sufficiently small by choosing $c_0$ to be small. The rest of the proof following applying \cite[Theorem 2.5]{tsybakov2008introduction}.
\end{proof}
\subsection{Sketch of the Proof of Theorem~\ref{thm:asymptotics}}\label{sec:skecth}
We start with the intuition of the proof with a function $f: \reals^p\rightarrow\reals$. To show that $f$ has a local minimizer around $\bx^*$, it is sufficient to show that the gradient $\nabla f(\bx^*)\approx 0$ and the Hessian matrix of $f(\bx)$, $\bH(\bx)$, is positive definite with eigenvalues strictly larger than some constant $c>0$. The intuition of the proof follows from the Taylor expansion that \begin{align}\label{eq:taylor}f(\bx)\approx& f(\bx^*)+(\bx-\bx^*)^T\nabla f(\bx^*) + \frac{1}{2}(\bx-\bx^*)^T H(\bx^*)(\bx-\bx^*)\\\geq& f(\bx^*)+(\bx-\bx^*)^T\nabla f(\bx^*) + \frac{c}{2}\|\bx-\bx^*\|^2.\nonumber\end{align} As a result, there is local minimizer in the neighbor of $\bx^*$ with radius $\frac{\|\nabla f(\bx^*)\|}{c}$, i.e., $B\left(\bx^*,\frac{\|\nabla f(\bx^*)\|}{c}\right)$.
To extend this proof to \eqref{eq:optimization}, the main obstacle is the nonlinear constraint in the optimization problem. To address this issue, we consider the constraint set in \eqref{eq:optimization} as a manifold and generalize the ``second-order Taylor expansion'' in \eqref{eq:taylor} to the function defined on a manifold. With this generalized Taylor expansion, a similar strategy can be applied to prove that the minimizer of \eqref{eq:optimization} is close to $(\bC^*,\gamma^*)$.

To analyze functions defined on manifolds, we introduce a few additional notations. We assume a manifold $\calM\subset\reals^p$ and a function $f:\reals^p\rightarrow \reals$, and investigate $f(\bx)$ for $\bx\in B(\bx^*,r)\cap \calM$, i.e., a local neighborhood of $\bx^*$ on the manifold $\calM$. We denote the first and second derivatives of $f(\bx)$ by $\nabla f(\bx)\in\reals^{p}$ and $\bH(\bx)\in\reals^{p\times p}$ respectively, the tangent plane of $\calM$ at $\bx^*$ by $T_{\bx^*}(\calM)$, and let $\Pi_{T_{\bx^*}(\calM)}$ and $\Pi_{T_{\bx^*}(\calM),\perp}$ be the projectors to $T_{\bx^*}(\calM)$ and its orthogonal subspace respectively. These definitions are visualized in Figure~\ref{fig:manifold}.

\begin{figure}[ht]
  \centering
  \includegraphics[width=0.7\linewidth]{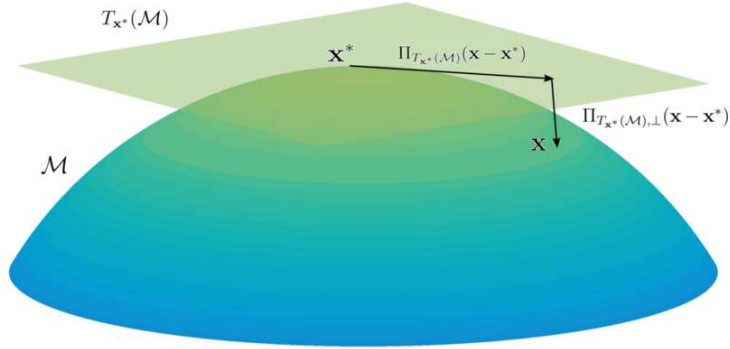}
  \caption{A visualization of the manifold $\calM$, two points $\bx,\bx^*\in\calM$, the tangent space $T_{\bx^*}(\calM)$, and the projectors $\Pi_{T_{\bx^*}(\calM)}$ and $\Pi_{T_{\bx^*}(\calM),\perp}$. }
  \label{fig:manifold}
\end{figure}

Then, we say that a manifold $\calM$ is curved with parameter $(c_0,C_T)$ at $\bx^*\in\calM$, if for any $\bx\in B(\bx^*,c_0)\cap\calM$, we have $\|\Pi_{T_{\bx^*}(\calM),\perp}(\bx-\bx^*)\|\leq C_T\|\Pi_{T_{\bx^*}(\calM)}(\bx-\bx^*)\|^2$. Intuitively, it means that the projection of $\bx-\bx^*$ to the tangent space $T_{\bx^*}(\calM)$ has a larger magnitude than the projection to the orthogonal subspace of the tangent space (see Figure~\ref{fig:manifold}).We remark that larger $C_T$ means that the manifold $\calM$ is more ``curved'' around $\bx^*$. Then,  Lemma~\ref{lemma:general_manifold} establishes the lower bound of $f$ based on the local properties such as the first and second derivatives of $f$ at $\bx^*$, the tangent space of $\calM$ around $\bx^*$, and the curvature parameters $(c_0,C_T)$.
\begin{lemma}\label{lemma:general_manifold}
Consider a $d$-dimensional manifold $\calM\subset\reals^p$ and a function $f: \reals^P\rightarrow\reals$, define
$
C_{H,1}= \min_{\bx\in B(\bx^*,c_0)} \lambda_{\min}(\bH(\bx)), $
and assume that $\calM$ is curved with parameter $(c_0,C_T)$ at $\bx^*$ and $4C_{H,1}\geq C_T\|\Pi_{T_{\bx^*}(\calM),\perp}\nabla f(\bx^*)\|$, then we have the following lower bound for any $\bx\in B(\bx^*,c_0)\cap\calM$:
\[
f(\bx)-f(\bx^*)\geq \frac{1}{2}b^2C_{H,1}- b \|\Pi_{T_{\bx^*}(\calM)}\nabla f(\bx^*)\|-C_Tb^2\|\Pi_{T_{\bx^*}(\calM),\perp}\nabla f(\bx^*)\|,
\]
where $b=\|\bx-\bx^*\|$.
\end{lemma}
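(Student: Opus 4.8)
The plan is to derive the inequality directly from an exact second-order Taylor expansion of $f$ about $\bx^*$, together with an orthogonal decomposition of the displacement $\bx-\bx^*$ into its tangent and normal components and the curvature hypothesis. Since $f$ is a twice-differentiable scalar function, for any $\bx\in B(\bx^*,c_0)\cap\calM$ the Lagrange form of Taylor's theorem yields
\[
f(\bx)=f(\bx^*)+(\bx-\bx^*)^T\nabla f(\bx^*)+\tfrac12(\bx-\bx^*)^T\bH(\xi)(\bx-\bx^*)
\]
for some $\xi$ on the segment joining $\bx^*$ and $\bx$. Because $B(\bx^*,c_0)$ is convex and both endpoints lie in it, $\xi\in B(\bx^*,c_0)$, so by the definition of $C_{H,1}$ we get $(\bx-\bx^*)^T\bH(\xi)(\bx-\bx^*)\geq C_{H,1}\|\bx-\bx^*\|^2=C_{H,1}b^2$. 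This immediately produces the positive quadratic term $\tfrac12 C_{H,1}b^2$, and the remaining work is to lower bound the linear term $(\bx-\bx^*)^T\nabla f(\bx^*)$.

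For the linear term, I would split both $\bx-\bx^*$ and $\nabla f(\bx^*)$ along $T_{\bx^*}(\calM)$ and its orthogonal complement. Since $\Pi_{T_{\bx^*}(\calM)}$ and $\Pi_{T_{\bx^*}(\calM),\perp}$ are orthogonal projectors onto complementary subspaces, the cross terms cancel and
\[
(\bx-\bx^*)^T\nabla f(\bx^*)=\big[\Pi_{T_{\bx^*}(\calM)}(\bx-\bx^*)\big]^T\Pi_{T_{\bx^*}(\calM)}\nabla f(\bx^*)+\big[\Pi_{T_{\bx^*}(\calM),\perp}(\bx-\bx^*)\big]^T\Pi_{T_{\bx^*}(\calM),\perp}\nabla f(\bx^*).
\]
The tangential piece is bounded below, via Cauchy--Schwarz and $\|\Pi_{T_{\bx^*}(\calM)}(\bx-\bx^*)\|\leq\|\bx-\bx^*\|=b$, by $-b\,\|\Pi_{T_{\bx^*}(\calM)}\nabla f(\bx^*)\|$, which matches the second term of the claimed bound.

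The key step, and the place where the geometry of the manifold enters, is the normal piece. Here I would again apply Cauchy--Schwarz and then invoke the curvature hypothesis, which states precisely that $\|\Pi_{T_{\bx^*}(\calM),\perp}(\bx-\bx^*)\|\leq C_T\|\Pi_{T_{\bx^*}(\calM)}(\bx-\bx^*)\|^2\leq C_Tb^2$. This converts what would otherwise be an uncontrolled $O(b)$ contribution in the normal direction into an $O(b^2)$ term, giving the lower bound $-C_Tb^2\|\Pi_{T_{\bx^*}(\calM),\perp}\nabla f(\bx^*)\|$. Adding the Hessian quadratic term to the two linear bounds produces exactly the claimed inequality. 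I expect this normal-direction estimate to be the main obstacle: it is what makes the nonlinear rank constraint tractable, because on a curved set $\bx-\bx^*$ is not confined to the tangent plane, and without quadratic curvature control the normal component of the gradient could overwhelm the quadratic lower bound.

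Finally, I would remark that the extra hypothesis $4C_{H,1}\geq C_T\|\Pi_{T_{\bx^*}(\calM),\perp}\nabla f(\bx^*)\|$ is not strictly required to establish the displayed inequality itself; rather, it guarantees that the net quadratic coefficient $\tfrac12 C_{H,1}-C_T\|\Pi_{T_{\bx^*}(\calM),\perp}\nabla f(\bx^*)\|\geq \tfrac14 C_{H,1}>0$ remains positive. This ensures the right-hand side is a genuinely upward-opening quadratic in $b$, which is what the downstream argument needs in order to localize a minimizer of $f$ on $\calM$ within a ball of radius $O(\|\Pi_{T_{\bx^*}(\calM)}\nabla f(\bx^*)\|/C_{H,1})$ around $\bx^*$.
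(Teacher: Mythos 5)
Your proposal is correct and follows essentially the same route as the paper's proof: both split the linear term $(\bx-\bx^*)^T\nabla f(\bx^*)$ into tangential and normal components, bound the tangential piece by $b\,\|\Pi_{T_{\bx^*}(\calM)}\nabla f(\bx^*)\|$ via Cauchy--Schwarz, control the normal piece by $C_Tb^2\|\Pi_{T_{\bx^*}(\calM),\perp}\nabla f(\bx^*)\|$ using the curvature hypothesis, and lower-bound the second-order remainder by $\tfrac12 C_{H,1}b^2$. The only cosmetic difference is that you use the Lagrange (mean-value) form of the Taylor remainder while the paper integrates $\langle\bv,\nabla f(\bt)-\nabla f(\bx^*)\rangle$ along the segment; these are equivalent under the smoothness already implicit in the lemma, and your closing observation that the hypothesis $4C_{H,1}\geq C_T\|\Pi_{T_{\bx^*}(\calM),\perp}\nabla f(\bx^*)\|$ is not needed for the displayed inequality is consistent with the paper's proof, which likewise never invokes it.
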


Lemma~\ref{lemma:general_manifold} can be viewed as a generalization of Inequality \eqref{eq:taylor}: when $\calM=\reals^p$, then we have $C_T=0$, $\Pi_{T,\perp}=\emptyset$ and as a result, $\|\Pi_{T_{\bx^*}(\calM),\perp}\nabla f(\bx^*)\|=0$.
To apply Lemma~\ref{lemma:general_manifold} to our problem, we need to estimate the parameters  $c_0, C_T, C_{H,1},$  $\|\Pi_{T_{\bx^*}(\calM)}\nabla f(\bx^*)\|$, and $\|\Pi_{T_{\bx^*}(\calM),\perp}\nabla f(\bx^*)\|$ in the statement of Lemma~\ref{lemma:general_manifold}. In particular, $(c_0,C_T)$ depends on the manifold $\calM$ used in the optimization problem \eqref{eq:optimization}, which is \begin{equation}\calM=\{(\bC,\gamma)\in\reals^{q\times m}\times \reals^p: \rank(\bC)=r\}\label{eq:manifold}.\end{equation} By treating $\calM$ as the product space of $\reals^p$ and the manifold of low-rank matrices $\{\bC\in\reals^{q\times m}: \rank(\bC)=r\}$ and following the  tangent space  of the set of low-rank matrices in the literature \citep{Absil2015,10.5555/3291125.3309642}, we obtain the following lemma of ``curvedness'' parameters $(c_0,C_T)$ of $\calM$ at $(\bC^*,\gamma^*)$.

\begin{lemma}\label{lemma:curvature}
The manifold $\calM$ defined in \eqref{eq:manifold} is curved with parameter $(c_0,C_T)$ at $(\bC^*,\gamma^*)$, for any $c_0\leq \sigma_{r}(\bC^*)/2$ and $C_T=2/\sigma_{r}(\bC^*)$, where $\sigma_{r}(\bC^*)$ represents the $r$-th (i.e., the smallest) singular value of $\bC^*$.
\end{lemma}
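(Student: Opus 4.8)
The plan is to exploit the product structure $\calM=\mathcal{M}_r\times\reals^p$, where $\mathcal{M}_r=\{\bC\in\reals^{q\times m}:\rank(\bC)=r\}$, and reduce the whole estimate to the low-rank factor. Since the $\reals^p$ factor carrying $\gamma$ is flat, its tangent space at $\gamma^*$ is all of $\reals^p$, so the normal projector $\Pi_{T,\perp}$ kills the $\gamma$-component. Writing $\Delta=(\bC-\bC^*,\gamma-\gamma^*)$, this gives $\|\Pi_{T,\perp}(\Delta)\|=\|\Pi_{T_{\bC^*}\mathcal{M}_r,\perp}(\bC-\bC^*)\|_F$ and $\|\Pi_{T}(\Delta)\|^2\geq\|\Pi_{T_{\bC^*}\mathcal{M}_r}(\bC-\bC^*)\|_F^2$, and moreover $\|\bC-\bC^*\|_F\leq\dist((\bC,\gamma),(\bC^*,\gamma^*))\leq c_0\leq\sigma_r(\bC^*)/2$. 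Hence it suffices to prove the curvedness inequality on $\mathcal{M}_r$ alone.

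Next I would fix a thin SVD $\bC^*=\bU\Sigma\bV^\top$ with $\bU\in\reals^{q\times r}$, $\bV\in\reals^{m\times r}$, $\Sigma=\mathrm{diag}(\sigma_1,\dots,\sigma_r)$ and $\sigma_r=\sigma_r(\bC^*)$, and recall the standard tangent-space formula for $\mathcal{M}_r$: with $\bP_U=\bU\bU^\top$, $\bP_V=\bV\bV^\top$, $\bP_U^{\perp}=\bI-\bP_U$, $\bP_V^{\perp}=\bI-\bP_V$, the normal projector is $\Pi_{T_{\bC^*}\mathcal{M}_r,\perp}(\bZ)=\bP_U^{\perp}\bZ\bP_V^{\perp}$. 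I then decompose $\bC$ into blocks relative to $\mathrm{range}(\bU)\oplus\mathrm{range}(\bU)^{\perp}$ and $\mathrm{range}(\bV)\oplus\mathrm{range}(\bV)^{\perp}$: $\bC_{11}=\bU^\top\bC\bV$, $\bC_{12}=\bU^\top\bC\bV_{\perp}$, $\bC_{21}=\bU_{\perp}^\top\bC\bV$, $\bC_{22}=\bU_{\perp}^\top\bC\bV_{\perp}$. Since $\bC^*$ lives entirely in the $(1,1)$ block, $\bC_{12}=\bU^\top\Delta_{\bC}\bV_{\perp}$ and $\bC_{21}=\bU_{\perp}^\top\Delta_{\bC}\bV$ with $\Delta_{\bC}=\bC-\bC^*$, and crucially $\Pi_{T_{\bC^*}\mathcal{M}_r,\perp}(\Delta_{\bC})$ is exactly the $\bC_{22}$ block, so $\|\Pi_{T,\perp}(\Delta)\|=\|\bC_{22}\|_F$, while $\bC_{12},\bC_{21}$ are tangent blocks giving $\|\bC_{12}\|_F,\|\bC_{21}\|_F\leq\|\Pi_{T}(\Delta)\|$.

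The key step, where the rank constraint does the work, is a Schur-complement identity. Since $\bC_{11}=\Sigma+\bU^\top\Delta_{\bC}\bV$ and $\|\Delta_{\bC}\|_F\leq c_0\leq\sigma_r/2$, the singular-value perturbation (Weyl) bound gives $\sigma_{\min}(\bC_{11})\geq\sigma_r-\|\Delta_{\bC}\|_F\geq\sigma_r/2>0$, so $\bC_{11}$ is invertible with $\|\bC_{11}^{-1}\|_{\mathrm{op}}\leq 2/\sigma_r$. Because $\rank(\bC)=r$ and its $(1,1)$ block is an invertible $r\times r$ matrix, the columns of $(\bC_{11};\bC_{21})$ already span the $r$-dimensional column space of $\bC$, forcing $(\bC_{12};\bC_{22})=(\bC_{11};\bC_{21})\bT$ for some $\bT$; matching the top block yields $\bT=\bC_{11}^{-1}\bC_{12}$, and the bottom block then gives $\bC_{22}=\bC_{21}\bC_{11}^{-1}\bC_{12}$. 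Combining everything,
\[
\|\Pi_{T,\perp}(\Delta)\|=\|\bC_{22}\|_F\leq\|\bC_{11}^{-1}\|_{\mathrm{op}}\,\|\bC_{21}\|_F\,\|\bC_{12}\|_F\leq\frac{2}{\sigma_r}\|\Pi_{T}(\Delta)\|^2,
\]
which is precisely the claimed curvedness with $C_T=2/\sigma_r(\bC^*)$ valid for any $c_0\leq\sigma_r(\bC^*)/2$.

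I expect the main obstacle to be establishing the Schur-complement identity $\bC_{22}=\bC_{21}\bC_{11}^{-1}\bC_{12}$ rigorously, and in particular verifying that $\bC_{11}$ stays invertible throughout $B((\bC^*,\gamma^*),c_0)$ — this is exactly the point where the hypothesis $c_0\leq\sigma_r(\bC^*)/2$ is forced, and where the rank constraint is converted from a global condition into the local block relation. Everything else is routine: the product-space reduction, the textbook tangent-space projector for $\mathcal{M}_r$, and the submultiplicative norm inequalities $\|ABC\|_F\leq\|A\|_{\mathrm{op}}\|B\|_{\mathrm{op}}\|C\|_F$ together with $\|M\|_{\mathrm{op}}\leq\|M\|_F$.
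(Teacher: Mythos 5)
Your proposal is correct and follows essentially the same route as the paper's proof: the same tangent/normal projectors for the fixed-rank manifold, the same orthogonal block decomposition of $\bC$ relative to the singular subspaces of $\bC^*$, the same rank-forced Schur-complement identity $\bC_{22}=\bC_{21}\bC_{11}^{-1}\bC_{12}$ with invertibility of the $(1,1)$ block secured by $c_0\leq\sigma_r(\bC^*)/2$, and the same final norm bound yielding $C_T=2/\sigma_r(\bC^*)$. Your write-up is in fact more explicit than the paper's at the key step (the paper asserts the identity $\bD_4=\bD_2(\bD_1+\bU^{*T}\bC^*\bV^{*})^{-1}\bD_3$ without the column-span argument you supply), but the underlying argument is identical.
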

In addition, the parameter $C_{H,1}$ in Lemma~\ref{lemma:general_manifold} can be estimated from Assumption 2, and the derivatives $\|\Pi_{T_{\bx^*}(\calM)}\nabla f(\bx^*)\|$ and $\|\Pi_{T_{\bx^*}(\calM),\perp}\nabla f(\bx^*)\|$ in Lemma~\ref{lemma:general_manifold} can be estimated from Assumptions 1 and 3. Then the proof of Theorem~\ref{thm:asymptotics} follows from  Lemma~\ref{lemma:general_manifold} and the intuition introduced at the beginning of this section, with technical details deferred to the Appendix.

\subsection{Discussion of Theorem~\ref{prop:convergence2}}\label{sec:convergence2}
\hfill

\textbf{Convergence.} The key result in this section, \eqref{eq:stationary}, shows that the algorithm achieves a similar estimation error as the result in the consistency result in Inequality \eqref{eq:bound_consistency}, with the term $\sqrt{\frac{6\lambda P(\bC^*,\gamma^*)}{C_2n}}$ replaced with $\frac{4\lambda C_{partial}}{nC_2}$. For many common penalty functions, $C_{partial}$ is bounded. For example, for the $\ell_1$ loss function that $P(\bC,\gamma)=\|\bC\|_1$ used in our simulations, we have $C_{partial}\leq \sqrt{qm}$.

\textbf{Condition on initialization in Inequality \eqref{eq:convergence_init}.} Theorem~\ref{prop:convergence2} makes two assumptions on the initialization in Inequality \eqref{eq:convergence_init}, where the LHS of the first condition is the distance between $(\bC^{(0)},\gamma^{(0)})$ and the true solution $(\bC^*,\gamma^*)$. While the second condition is more complicated, but it is satisfies when $\dist((\bC^{(0)},\gamma^{(0)}),(\bC^*,\gamma^*))=o_P(1)$ and $\lambda=o(1)$ as $n\rightarrow\infty$: note that the LHS is
\begin{align*}
&F(\bC^{(0)},\gamma^{(0)})-\sum_{i=1}^n l(y_i,\langle \Xb_i,\Cb^*\rangle+\gamma^{*T}\zb_i)\\=&\sum_{i=1}^n l(y_i,\langle \Xb_i,\Cb^{(0)}\rangle+\gamma^{(0)T}\zb_i)-\sum_{i=1}^n l(y_i,\langle \Xb_i,\Cb^*\rangle+\gamma^{*T}\zb_i)+\lambda P(\bC^{(0)},\gamma^{(0)}),
\end{align*}
and Assumption 4 implies that 
\[
\sum_{i=1}^n l(y_i,\langle \Xb_i,\Cb^{(0)}\rangle+\gamma^{(0)T}\zb_i)-\sum_{i=1}^n l(y_i,\langle \Xb_i,\Cb^*\rangle\leq nC_3 \dist((\bC^{(0)},\gamma^{(0)}),(\bC^*,\gamma^*))^2.
\]

\textbf{Initialization.} Theorem~\ref{prop:convergence2} requires an initialization that is within a neighborhood of the true solution of radius $O(1)$. While empirically we initialize $\bC^{(0)}$ and $\gamma^{(0)}$ as a zero matrix and a zero vector, it is possible to obtain initialization with theoretical guarantees. For example, for the regression setting, we may let the initialization to be the solution of the standard regression problem, i.e., the solution of $\argmin_{\bC,\gamma}\sum_{i=1}^n (y_i-\langle \Xb_i,\Cb\rangle-\gamma^T\zb_i)^2$, which has an initial estimation error in the order of $O_P\left(\sqrt{\frac{qm+p}{n}}\right)$. For the robust regression and logistic regression settings, we may solve $\argmin_{\bC,\gamma}\sum_{i=1}^n l(y_i,\langle \Xb_i,\Cb\rangle+\gamma^T\zb_i)$ as well. These are convex problems, and the standard asymptotic analysis shows that the estimation errors would converge to zero as $n\rightarrow\infty$, that is, the conditions in the inequality \eqref{eq:convergence_init} will be satisfied as $n\rightarrow\infty$.

\textbf{Convergence rate.} Unfortunately, it is difficult to obtain a general result of the convergence rate to $(\hat{\bC},\hat{\gamma})$ without assumptions on the penalty $P$. However, our proof implies that Algorithm~\ref{alg:matrix-reg-log} converges linearly to a neighborhood around the optimal solution $(\bC^*,\gamma^*)$, and empirically Algorithm~\ref{alg:matrix-reg-log} converges quickly and the convergence rate are linear in our simulations.

\section{Applications}\label{sec:simulation}

In this section, we carry out an extensive numerical study to investigate the empirical performance of our proposed methods on from simulated 2D shape data, brain signal electroencephalography (EEG) data, and leucorrhea data. We compare the proposed matrix variate regression models with the spectral regularized regression estimator (SRRE) proposed in \cite{zhou2014regularized}, and the robust low-rank regression matrix estimator (RLRME) proposed in \cite{elsener2018robust}.
For the logistic matrix variate regression model, we compare our estimator with the Simultaneous Differential Network analysis and Classification for Matrix-Variate data (SDNCMV) method \citep{hao2020simultaneous}, which used the
Constrained $l_1$-minimization for Inverse Matrix Estimation (CLIME) method \citep{cai2011constrained} and logistic regression with the elastic net penalty \citep{zou2005regularization}.
We put some figures and tables in the Supplement due to the space limit.

 Other similar regularized estimators based on SRRE include the double fused Lasso regularized matrix regression (DFMR) and double fused Lasso regularized matrix logistic regression (DFMLR) in \cite{li2021double}, but we skip the comparison with the DFMR and DFLMR estimators because they consider other conditions on the variate such as the sparsity in the difference of successive rows of the matrix variate and the sparsity in the vector variate, which are not considered in our settings.
 We also do not compare our estimator with \cite{li2018tucker} and \cite{zhou2013tensor} since these two works focus on the tensor setting rather than the matrix setting.
For the robust matrix variate regression model, we compare our estimator with the robust low-rank matrix estimator (RLRME)~\citep{elsener2018robust}, which can be considered as a robust version of the SRRE estimator.

\subsection{Simulation data analysis: matrix-variate regression}

In the first example, we generate a collection of 2D shapes as the true signals and the response is sampled as $y_i=\langle \bX_i,\bC^*\rangle+\langle\bz_i,\gamma^*\rangle+\epsilon_i$, where $\bX_i\in\reals^{64\times 64},\bz_i\in\reals^5$ are random with standard normal entries and $\epsilon_i$ is also standard normal. We set $\gamma^*=(1,1,1,1,1)^T$ in the following experiments. The true signal $\bC^*\in\reals^{64\times 64}$ is a 2D matrix whose entries are binary. We compare the performance of RMRE and SRRE with the various sample sizes at $n=300, 500, 700, 1000$. To tune the regularization parameter, \citet{zhou2014regularized} estimated the degree of freedom of SRRE and suggests to use Bayesian information criterion (BIC) to choose $\lambda$. In this experiment, we would rather use an independent validation set to tune the parameter $\lambda$ and a testing set to evaluate the error for both of the methods for fairness. The experiments are repeated 100 times and the mean with the standard deviation of the root mean square errors (RMSEs) of $\bC$ and $\gamma$ are present in Table~\ref{table:2d}. A comparison of the recovered signals of RMRE and SRRE at $n=500$ is shown in Fig.~\ref{fig:_triangle_circle_butterfly-robust}.

\begin{figure}[ht]
    \label{fig:2D-shapes}
    \centering
    \includegraphics[width=0.8\linewidth]{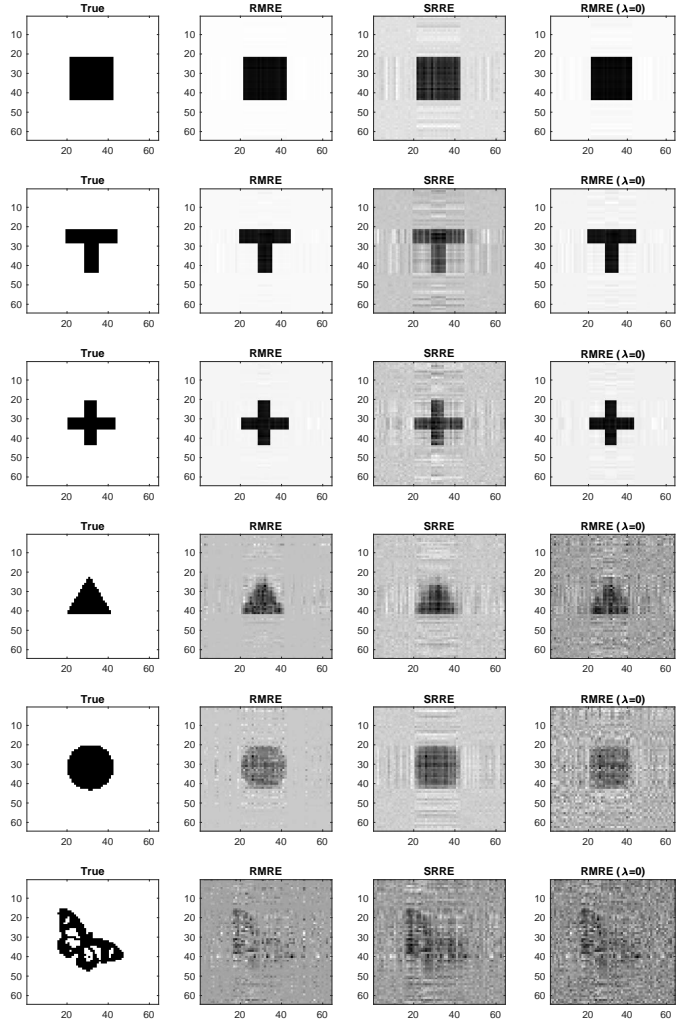}
    \caption{Comparison of the estimators RMRE and SRRE. The first column are the true signals, the fourth column are recovered signals of RMRE without regularization. The sample size is 500. }    
\end{figure}

As illustrated in Fig.~\ref{fig:_triangle_circle_butterfly-robust}, we evaluate the proposed method based on the recovered 2D shapes. First, RMRE outperforms SRRE for the square, T, and cross shapes, even without regularization ($\lambda=0$) it still estimates the signals well. RMRE and SRRE have similar performance in the triangle and circle shapes, while SRRE is slightly better for the butterfly. Second, the regularization of RMRE improved the estimation for the triangle and the circle shapes. Lastly, the recovered butterflies of both of RMRE and SRRE are barely recognized as the sample size $n=500$ is too small in this case.
The RMSEs (see  Table~\ref{table:2d}) of both $\bC$ and $\gamma$ of RMRE are much smaller than the RMSEs of SRRE with various values of $n$ for the square, T, cross shapes. As for the triangle and circle, both methods have similar performance when the sample size is small and RMRE has a better estimation as the sample size increases. RMRE performs slightly better than SRRE for the more complex shape, butterfly, when the sample size is taken as $n=1000$ and performs slightly worse when $n$ is smaller than 500. Moreover, the RMSEs of $\bC$ and $\gamma$ of RMRE are decreasing as the sample size is increasing, which verifies our theoretical analysis of the consistency in Section~\ref{sec:consistency}.

\begin{table}[ht]
\caption{Estimation errors for 2D shapes in the normal model . The means (standard deviation) of the RMSEs of $\hat \bC$ and $\hat \gamma$ with 100 repetitions of RMRE and SRRE are reported. }
\label{table:2d}
\resizebox{\linewidth}{!}{
\begin{tabular}{|c|c|c|c|c|c|c|}
\hline 
Shape & Para. & Method & $n=300$ & $n=500$ & $n=700$ & $n=1000$ \\ 
\hline \hline
\multirow{4}{*}{Square} & \multirow{2}{*}{$\bC$}  & RMRE & 0.0144(0.0018) & 0.0105(0.0011) & 0.0093(0.0009) & 0.0086(0.0009) \\
& & SRRE & 0.1934(0.0208) & 0.0648(0.0086) & 0.0300(0.0032)   & 0.0170(0.0011) \\ 
\cline{3-7}
& \multirow{2}{*}{$\gamma$} & RMRE & 0.0728(0.0245) & 0.0475(0.0153) & 0.0424(0.0140)  & 0.0332(0.0103) \\
& & SRRE & 0.6465(0.2352) & 0.1686(0.0636) & 0.0815(0.0259) & 0.0429(0.0119) \\ 
\hline
\hline
\multirow{4}{*}{T} & $\multirow{2}{*}{$\bC$}$ & RMRE & 0.0438(0.0563) & 0.0117(0.0015) & 0.0086(0.0009) & 0.0070(0.0006)  \\ 
& & SRRE & 0.2068(0.0091) & 0.1388(0.0106) & 0.0762(0.0084) & 0.0328(0.0026) \\ 
\cline{3-7}
& $\multirow{2}{*}{$\gamma$}$ & RMRE & 0.1686(0.2022) & 0.0549(0.0172) & 0.0395(0.0129) & 0.0334(0.0107) \\
& & SRRE & 0.7381(0.2099) & 0.3768(0.1274) & 0.1844(0.0625) & 0.0716(0.0219) \\
\hline
\hline
\multirow{4}{*}{Coss} & $\multirow{2}{*}{$\bC$}$ & RMRE & 0.0368(0.0497) & 0.0113(0.0013) & 0.0083(0.0009) & 0.0065(0.0006) \\ 
& & SRRE & 0.1952(0.0083) & 0.1305(0.0086) & 0.0724(0.0075) & 0.0316(0.0023) \\  
\cline{3-7}
& $\multirow{2}{*}{$\gamma$}$ & RMRE & 0.1386(0.1771) & 0.0547(0.0199) & 0.0398(0.0141) & 0.0321(0.0093) \\ 
& & SRRE & 0.7226(0.2423) & 0.3494(0.1134) & 0.1672(0.0564) & 0.0683(0.0223) \\ 
\hline 
\end{tabular}}
\end{table}

\begin{table}[ht]
\caption{(Continued) Estimation errors for 2D shapes in the normal model. The means (standard deviation) of the RMSEs of $\hat \bC$ and $\hat \gamma$ with 100 repetitions of RMRE and SRRE are reported. }
\label{table:2d-2}
\resizebox{\linewidth}{!}{
\begin{tabular}{|c|c|c|c|c|c|c|}
\hline 
Shape & Para. & Method & $n=300$ & $n=500$ & $n=700$ & $n=1000$ \\ 
\hline \hline
\multirow{4}{*}{Triangle} & $\multirow{2}{*}{$\bC$}$ & RMRE & 0.1777(0.0092) & 0.1036(0.0169) & 0.0615(0.0053) & 0.0546(0.0029) \\
& & SRRE & 0.1826(0.0076) & 0.1439(0.0059) & 0.1163(0.0052) & 0.0903(0.0033) \\ 
\cline{3-7}
& \multirow{2}{*}{$\gamma$} & RMRE & 0.5981(0.1759) & 0.2938(0.0993) & 0.1456(0.0456) & 0.1031(0.0297) \\ 
&  & SRRE & 0.6317(0.2171) & 0.4015(0.1238) & 0.264(0.0762)  & 0.1655(0.0587) \\
\hline \hline
\multirow{4}{*}{Circle} & \multirow{2}{*}{$\bC$} & RMRE & 0.2513(0.0101) & 0.1480(0.0195)  & 0.0529(0.0064) & 0.0413(0.0020)  \\
& & SRRE & 0.2244(0.0125) & 0.1571(0.0080)  & 0.1184(0.0058) & 0.0839(0.0039) \\ 
\cline{3-7}
& \multirow{2}{*}{$\gamma$} & RMRE & 0.8917(0.3156) & 0.4064(0.1590)  & 0.1303(0.0464) & 0.0880(0.0259) \\
& & SRRE & 0.8034(0.2861) & 0.4431(0.1342) & 0.2768(0.1018) & 0.1659(0.0565) \\ 
\hline \hline
\multirow{4}{*}{Butterfly} & \multirow{2}{*}{$\bC$} & RMRE & 0.3033(0.0036) & 0.2726(0.0067) & 0.2368(0.0106) & 0.1561(0.0124) \\ 
& & SRRE & 0.2894(0.0062) & 0.2602(0.0065) & 0.2312(0.0057) & 0.1954(0.0050)  \\ 
\cline{3-7}
& \multirow{2}{*}{$\gamma$} & RMRE & 1.0743(0.3808) & 0.7234(0.2492) & 0.5368(0.1548) & 0.3276(0.1036) \\ 
& & SRRE & 1.0141(0.3773) & 0.7087(0.2265) & 0.5362(0.1688) & 0.3928(0.1322) \\ \hline
\end{tabular}}
\end{table}

In the second experiment, we generate a class of synthetic signals under various ranks and sparsity levels, and apply RMRE and SRRE to these signals to compare their performance. Specifically, the matrix predictors $\bX_i\in\reals^{64\times 64}$ are matrices with each entry is standard normal and the vector predictors $\bZ_i\in\reals^5\sim\mathcal{N}(0,\mathbf{I})$ are random 5-dimensional vectors. The number of sample is chosen as $n=500$. Moreover, we let $\gamma^*=(1,1,1,1,1)^T$ and $\bC^*=\bC_1\bC_2^T$, where $\bC_d\in\{0,1\}^{64\times r},d=1,2$. Each entry of $\bC_d$ is taken as 1 with probability $\sqrt{1-(1-s)^{1/r}}$ and 0 otherwise. The parameter $r$ controls the rank of $\bC$ and $s$ controls its sparsity. In the experiment, we choose $r=1,5,10,20$ and $s=0.01,0.05,0.1,0.2,0.5$. We consider a  normal model where the response $y_i$ is generated as $\langle \bX_i,\bC^*\rangle+\gamma^{*T}\bZ_i+\epsilon_i$, where $\epsilon_i$ is standard normal.

We evaluate the performance of RMRE and SRRE with respect to prediction. We trained both methods on a train set with $500$ samples and evaluated the prediction errors on an independent test set with sample size $500$. The prediction error is defined as the misclassification rate of the responses in the logistic model and is defined as RMSE of the responses in the normal model. The regularization parameter $\lambda$ is chosen as the prediction error on the test set is minimized and this approach is applied for both methods for fairness. We reported the mean and the standard deviations of the prediction errors with 100 repetitions in Table~\ref{table:matrix-syn-3} for the normal model. The results for RMSEs of the matrix coefficient $\bC^*$ are reported in Table~\ref{table:matrix-syn-4}.
Compared with SRRE, our proposed method has good estimation results for the rank-1 matrices in both normal and logistic models. As the sparsity decreases, our method outperforms SRRE for matrix of various ranks. When the sparsity parameter $s$ is less or equal than $0.2$, our method at least has a similar performance both in RMSEs in estimating $\bC^*$ and prediction in the responses. The only exception case is for large sparsity matrices ($s=0.5$ and $r\geq 5$), SRRE has smaller estimation errors.


\begin{table}[ht]
\caption{Estimation result for synthetic data in the logistic model. Reported are the mean and standard deviation (in the parenthesis) of the rate of prediction error of $\hat{Y}_i$ out of 100 data replications. RMRE is the proposed method, regularized matrix-variate regression estimator, and SRRE is the spectral regularized regression estimator. }
\label{table:matrix-syn-1}
\begin{center}
\resizebox{\linewidth}{!}{
\begin{tabular}{|cc|c|c|c|c|}
\hline
Sparsity & Method & \multicolumn{4}{c|}{Rank} \\ \cline{3-6}
& & $r=1$ & $r=5$ & $r=10$ & $r=20$  \\ \hline \hline
$1\%$    & RMRE   & 0.1178(0.0218) & 0.2504(0.0449) & 0.2999(0.0359) & 0.3346(0.0346) \\
         & SRRE   & 0.2351(0.0298) & 0.3454(0.0302) & 0.3641(0.0252) & 0.3755(0.0260)  \\ \hline \hline
$5\%$    & RMRE   & 0.0994(0.0178) & 0.3460(0.0338) & 0.3899(0.0255) & 0.3994(0.0247) \\
         & SRRE   & 0.2034(0.0226) & 0.3688(0.0287) & 0.3952(0.0230) & 0.4053(0.0256) \\ \hline \hline
$10\%$   & RMRE   & 0.1192(0.0234) & 0.3744(0.0317) & 0.4059(0.0233) & 0.4098(0.0230) \\
         & SRRE   & 0.2012(0.0260) & 0.3704(0.0258) & 0.3986(0.0227) & 0.4092(0.0226) \\ \hline \hline
$20\%$   & RMRE   & 0.1347(0.0216) & 0.3799(0.0259) & 0.4060(0.0279) & 0.4146(0.0204) \\
         & SRRE   & 0.1978(0.0257) & 0.3474(0.0303) & 0.3828(0.0297) & 0.4008(0.0249)\\ \hline \hline
$50\%$   & RMRE   & 0.1293(0.0192) & 0.3605(0.0262) & 0.3935(0.0226) & 0.4107(0.0206) \\
         & SRRE   & 0.1915(0.0253) & 0.2844(0.0316) & 0.3145(0.0351) & 0.3467(0.0306)  \\ \hline
\end{tabular}}
\end{center}
\end{table}

\begin{table}[ht]
\caption{Estimation result for synthetic data in the logistic model. Reported are the mean and standard deviation (in the parenthesis) of the RMSEs of $\hat{\bC}$ out of 100 data replications. RMRE is the proposed method, regularized matrix-variate regression estimator, and SRRE is the spectral regularized regression estimator. }
\label{table:matrix-syn-2}
\begin{center}
\resizebox{\linewidth}{!}{
\begin{tabular}{|cc|c|c|c|c|}
\hline
Sparsity & Method & \multicolumn{4}{c|}{Rank} \\ \cline{3-6}
 & & $r=1$ & $r=5$ & $r=10$ & $r=20$ \\ \hline \hline
$1\%$    & RMRE   & 0.0697(0.0250) & 0.0842(0.0199) & 0.0921(0.0188) & 0.0954(0.0140) \\
         & SRRE   & 0.0860(0.0256) & 0.0915(0.0189) & 0.0970(0.0183) & 0.0984(0.0137) \\ \hline \hline
$5\%$    & RMRE   & 0.1920(0.0328) & 0.2225(0.0286) & 0.2238(0.0264) & 0.2235(0.0223) \\
         & SRRE   & 0.2090(0.0317) & 0.2251(0.0280) & 0.2248(0.0264) & 0.2238(0.0222)\\ \hline \hline
$10\%$   & RMRE   & 0.2817(0.0431) & 0.3301(0.0367) & 0.3315(0.0273) & 0.3345(0.0301) \\
         & SRRE   & 0.3019(0.0464) & 0.3327(0.0376) & 0.3321(0.0272) & 0.3350(0.0301) \\ \hline \hline
$20\%$   & RMRE   & 0.4090(0.0447) & 0.4972(0.0411) & 0.5159(0.0398) & 0.5105(0.0411) \\
         & SRRE   & 0.4421(0.0466) & 0.5021(0.0420) & 0.5176(0.0400) & 0.5110(0.0411)\\ \hline \hline
$50\%$   & RMRE   & 0.6568(0.0384) & 0.9765(0.0672) & 1.0326(0.0727) & 1.0458(0.0635)\\
         & SRRE   & 0.6932(0.0388) & 0.9855(0.0678) & 1.0343(0.0725) & 1.0464(0.0634)\\ \hline
\end{tabular}}
\end{center}
\end{table}

\begin{table}[ht]
\caption{Estimation result for synthetic data in the normal model. Reported are the mean and standard deviation (in the parenthesis) of the RMSEs of responses $\hat{Y}_i$ out of 100 data replications. RMRE is the proposed method,  regularized matrix-variate regression estimator, and SRRE is the spectral regularized regression estimator. }
\label{table:matrix-syn-3}
\begin{center}
\resizebox{\linewidth}{!}{
\begin{tabular}{|cc|c|c|c|c|}
\hline
Sparsity & Method & \multicolumn{4}{c|}{Rank}\\ \cline{3-6}
 & & $r=1$ & $r=5$ & $r=10$ & $r=20$ \\ \hline \hline
$1\%$    & RMRE   & 1.0664(0.0433) & 1.5320(0.5631) & 2.5858(1.0295) & 5.0129(1.1507) \\
         & SRRE   & 2.1023(0.2856) & 5.1372(1.1028) & 5.6477(1.0383) & 5.9977(0.9132) \\ \hline \hline
$5\%$    & RMRE   & 1.0925(0.0483) & 4.4520(3.5651) & 11.7671(1.9385) & 13.2728(1.5353)\\
         & SRRE   & 3.1843(0.4954) & 12.0552(1.4539) & 12.8586(1.4045) & 13.5944(1.3929)\\ \hline \hline
$10\%$   & RMRE   & 1.1874(0.0907) & 11.8509(5.5520) & 19.1758(2.0377) & 20.0153(1.6428)\\
         & SRRE   & 4.2707(0.9060) & 17.4163(1.8125) & 19.1120(1.8045) & 19.7686(1.5268)\\ \hline \hline
$20\%$   & RMRE   & 1.5175(0.1847) & 25.2633(3.1545) & 29.7988(2.5752) & 31.1058(2.6314)\\
         & SRRE   & 5.7842(1.2161) & 25.2321(2.1572) & 27.9915(2.2142) & 29.8019(2.4316)\\ \hline \hline
$50\%$   & RMRE   & 2.9755(0.3597) & 52.1691(4.8101) & 59.9110(4.4410) & 63.3346(4.5564)\\
         & SRRE   & 8.3832(1.6463) & 41.1190(2.9915) & 49.2962(3.3167) & 54.4517(3.3581)\\ \hline
\end{tabular}}
\end{center}
\end{table}

\begin{table}[ht]
\caption{Estimation result for synthetic data in the normal model. Reported are the mean and standard deviation (in the parenthesis) of the RMSEs of $\hat{\bC}$ out of 100 data replications. RMRE is the proposed method, regularized matrix variate regression estimator, and SRRE is the spectral regularized regression estimator. }
\label{table:matrix-syn-4}
\begin{center}
\resizebox{\linewidth}{!}{
\begin{tabular}{|cc|c|c|c|c|}
\hline
Sparsity & Method & \multicolumn{4}{c|}{Rank} \\ \cline{3-6}
&  & $r=1$ & $r=5$ & $r=10$ & $r=20$ \\ \hline \hline
$1\%$    & RMRE   & 0.0053(0.0009) & 0.0174(0.0102) & 0.0368(0.0170) & 0.0765(0.0179)\\
         & SRRE   & 0.0285(0.0047) & 0.0784(0.0174) & 0.0860(0.0160) & 0.0923(0.0135)\\ \hline \hline
$5\%$    & RMRE   & 0.0067(0.0010) & 0.0661(0.0569) & 0.1815(0.0303) & 0.2049(0.0222)\\
         & SRRE   & 0.0471(0.0079) & 0.1878(0.0225) & 0.1989(0.0215) & 0.2101(0.0196)\\ \hline \hline
$10\%$   & RMRE   & 0.0099(0.0024) & 0.1836(0.0876) & 0.2975(0.0316) & 0.3114(0.0265)\\
         & SRRE   & 0.0648(0.0139) & 0.2708(0.0275) & 0.2969(0.0262) & 0.3080(0.0244)\\ \hline \hline
$20\%$   & RMRE   & 0.0177(0.0035) & 0.3955(0.0475) & 0.4640(0.0370) & 0.4833(0.0369)\\
         & SRRE   & 0.0888(0.0189) & 0.3932(0.0294) & 0.4354(0.0317) & 0.4627(0.0331)\\ \hline \hline
$50\%$   & RMRE   & 0.0433(0.0055) & 0.8153(0.0672) & 0.9296(0.0593) & 0.9862(0.0630)\\
         & SRRE   & 0.1284(0.0244) & 0.6426(0.0440) & 0.7645(0.0436) & 0.8468(0.0459)\\ \hline
\end{tabular}}
\end{center}
\end{table}

\subsection{Simulation data: matrix-variate logistic regression}

We consider a  logistic regression model where the response $y_i$ is $1$ with probability $\frac{1}{1+e^{-\langle \bX_i,\bC^*\rangle-\gamma^{*T}\bZ_i}}$ and 0 otherwise.
We evaluate the performance of our method (RMRE) and SRRE, and report the mean and the standard deviations of the predication errors under 100 replications in Table~\ref{table:matrix-syn-1} for the logistic model. The results for RMSEs of the matrix coefficient $\bC$ are reported in Table~\ref{table:matrix-syn-2}.

\subsection{Simulation data analysis: robust matrix-variate regression}

We compare the proposed robust matrix variate regression with the robust low-rank matrix estimator (RLRME)~\citep{elsener2018robust}, which can be considered a robust version of SRRE by replacing the quadratic loss with the Huber loss.
The value of the tuning parameter is suggested as $\lambda=2\sqrt{\log(m+q)/q/n}$ in \citet{elsener2018robust}. For a fair comparison, we instead use the same method to tune the parameter in the following simulations. We trained both of Robust RMRE and RLRME on a train set with certain number of samples and evaluated the prediction errors on an independent validation set with the sample number of samples.  The results are in Tables 1 to 5. The responses in both of the train set and the validation set are generated by 
$y=\langle \bX,\bC^*\rangle+\gamma^{*T}\bZ+(1-p)\epsilon+p\epsilon'$, where $\epsilon'\sim\mathcal{N}(0,10^4)$ is the contamination with probability $p$. The results are in Tables 6 and 7. The value of $\gamma^*$ is set to $(1,1,1,1,1)^T$ for all simulations in this section.

We use RMSEs of $\bC$ and $\gamma$ with the various sample sizes to compare the proposed method, robust regularized matrix-variate regression estimator (Robust RMRE), with RLRME, which requires a certain number of samples to make an accurate estimation. Yet, our proposed method does not have such a requirement and has smaller errors even the sample size is small. In this experiment, the true signal $\bC^*\in\reals^{30\times 30}$ is a square shape and is defined as $\bC^*_{ij}=1$ if $10\leq i,j\leq 20$ and $\bC^*_{ij}=0$ otherwise. 
The noise is set as $\epsilon\sim\mathcal{N}(0,1)$ and the contamination probability is $p=0.05$ and $p=0.1$.
Fig.~\ref{fig:msre-sample-size} shows the RMSEs of the estimated $\bC$ and $\gamma$ against the sample size with the corrupted responses. The response has the corruption with probability 0.05 in the left figure of Fig.~\ref{fig:msre-sample-size} and this probability is taken as 0.1 in the right figure. In both figures, RLRME has large errors when the sample size is smaller than 1000 and our method (RobustRMRE) always has small errors even the sample size is also small. This phenomenon can be explained by Theorem~\ref{thm:asymptotics}, where it is shown that Robust RMRE has a better consistency rate than RLRME when $r$ is smaller than $m$ and $q$.

We further compare the robust estimators RobustRMRE and RLRME with non-robust estimator RMRE for various 2D shapes (Triangle, Circle, Butterfly). The same 2D shape signals in \citet{zhou2013tensor} and \citet{zhou2014regularized} are used here but the matrix size is changed to $40\times 40$. The sample size is taken as 1000 and the responses are corrupted with probability $0.05$. Fig.~\ref{fig:_triangle_circle_butterfly-robust} exhibits that RobustRMRE can recover better signals than RLRME while the non-robust estimator has a poor estimation when the outliers are present. The numerical results are reported in Table~\ref{table:robust_gaussian_noise} with the various sample sizes. The responses of the training and validation sets are corrupted with probability $0.05$. We conducted 100 repetitions of the simulation and reported the mean and the standard deviation of the RMSEs. Overall, the errors of our method, RobustRMRE, are much smaller than RLRME and the non-robust estimator, RMRE. RLRME is robust to the outliers for most of the shapes except for some simple shapes such as the square. Both RobustRMRE and RLRME have smaller errors as the sample size is increasing. An additional comparison is made when the noise distribution is chosen as the standard Cauchy distribution. The RMSEs of $\mathbf{C}$ and $\gamma$ against the sample size with Cauchy noise are shown in Fig.~\ref{fig:msre-sample-size-cauchy}. The experimental results reported in Table~\ref{table:robust_cauchy} also indicate that our method RobustRMRE outperforms than RLRME when the noise distribution is heavy-tailed.

\begin{figure}[ht]
   \centering
   \includegraphics[width=0.45\linewidth]{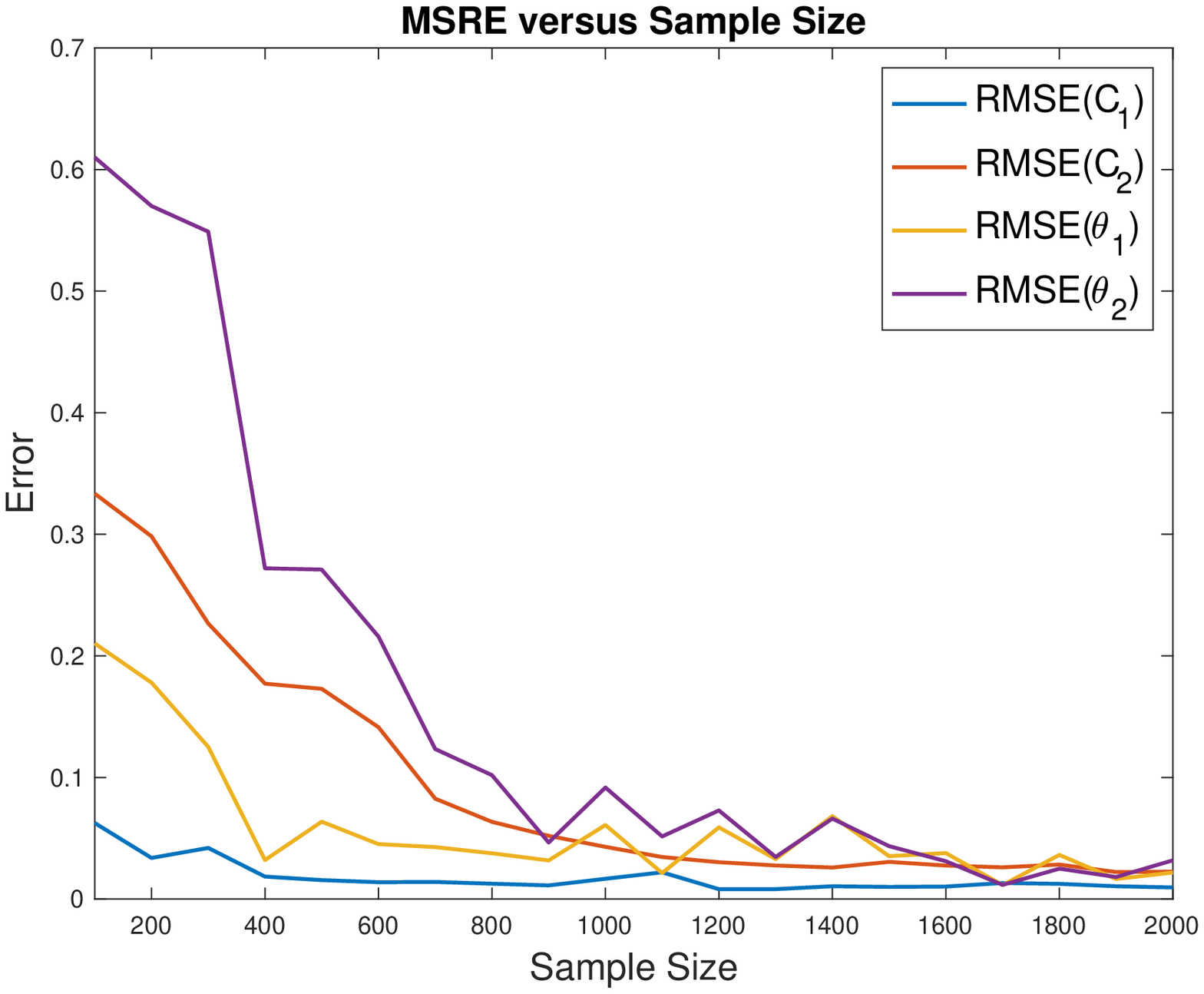}
   \includegraphics[width=0.45\linewidth]{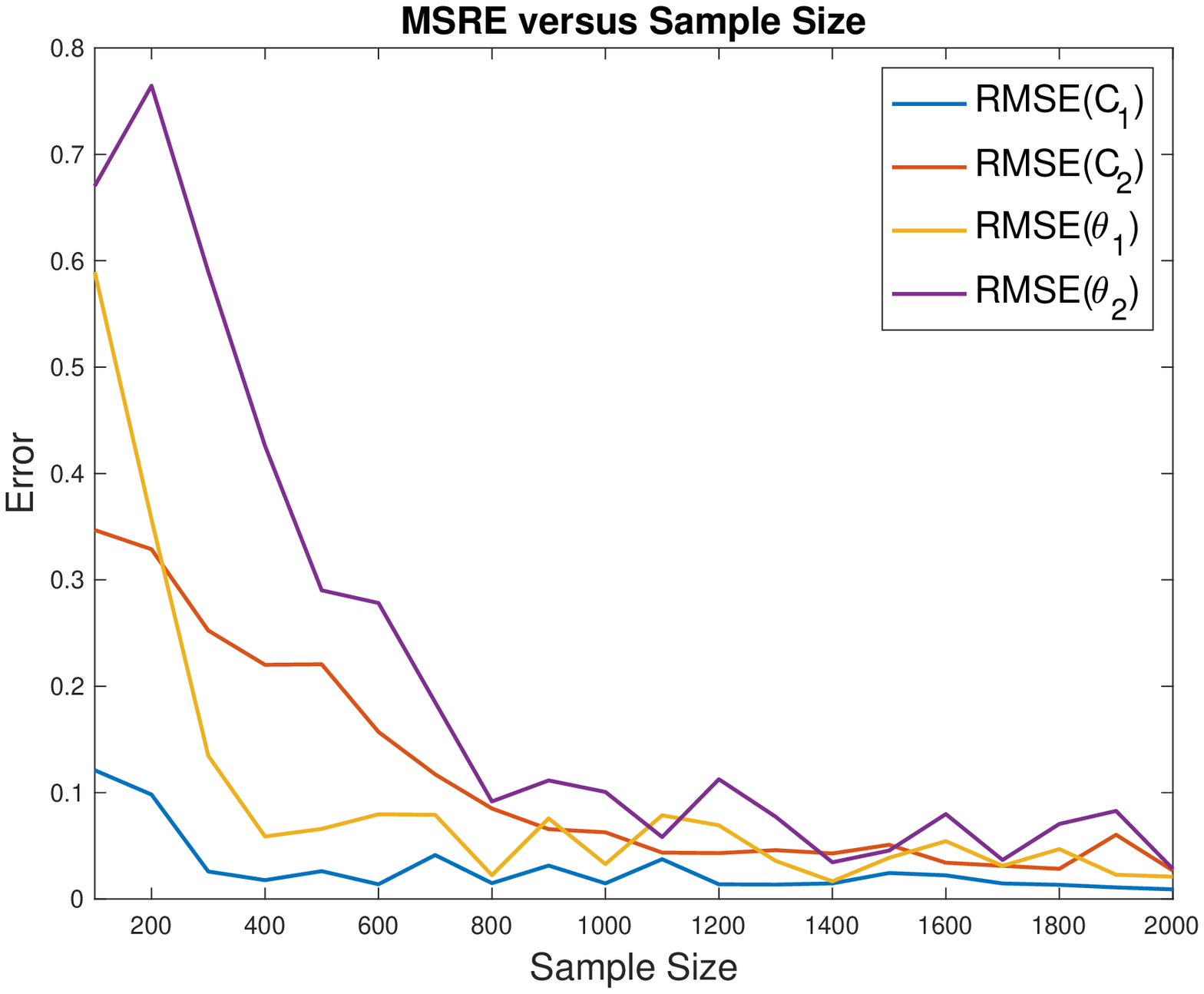}
   \caption{Comparison of the proposed robust estimator (Robust RMRE) with RLRME as the number of samples increases. The outliers are generated with probability 0.05 (left) and 0.1 (right). Here $C_1:=\hat{\bC}^{RobustRMRE}, C_2:=\hat{\bC}^{RLRME}, \theta_1:=\hat{\gamma}^{RobustRMRE},\theta_2:=\hat{\gamma}^{RLRME}$ denote the estimated values of $\bC$ and $\gamma$. The curves are obtained out of 10 replications. }
   \label{fig:msre-sample-size}
\end{figure}

\begin{figure}[ht]
   \centering
   \includegraphics[width=0.6\linewidth]{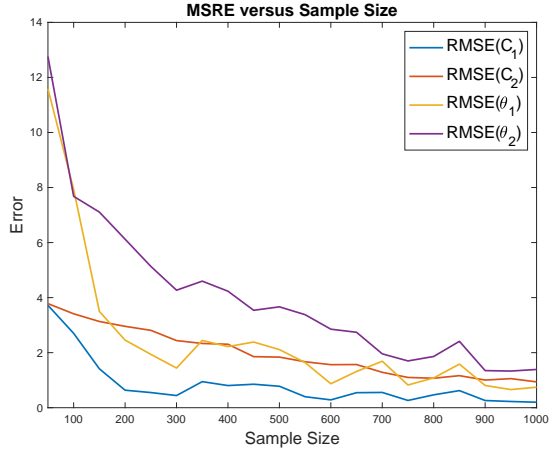}
   \caption{Comparison of the proposed robust estimator (Robust RMRE) with RLRME as the number of samples increases. The noise follows the standard Cauchy distribution. Here $C_1:=\hat{\bC}^{RobustRMRE}, C_2:=\hat{\bC}^{RLRME}, \theta_1:=\hat{\gamma}^{RobustRMRE},\theta_2:=\hat{\gamma}^{RLRME}$ denote the estimated values of $\bC$ and $\gamma$. The curves are obtained out of 10 replications. }
   \label{fig:msre-sample-size-cauchy}
\end{figure}

\begin{figure}[ht]
  \centering
  \includegraphics[width=\linewidth]{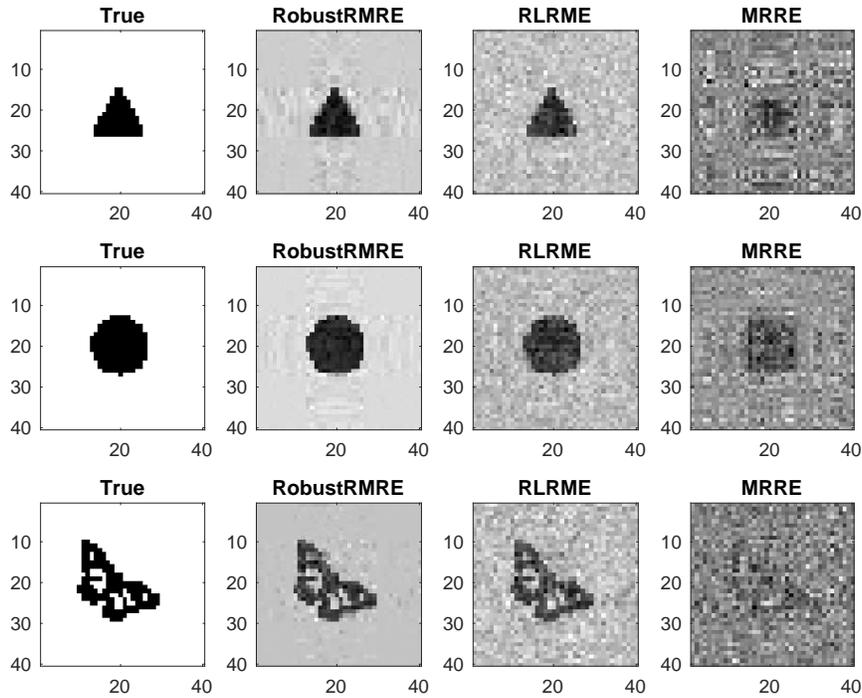}
  \caption{Comparison of the robust estimators RobustRMRE and RLRME with non-robust estimator. The sample size is 1000. }
  \label{fig:_triangle_circle_butterfly-robust}
\end{figure}

\begin{table}[ht]
\caption{Estimation errors for 2D shapes in the normal model with corruption probability 0.1. The means (standard deviation) of the RMSEs of $\hat \bC$ and $\hat \gamma$ with 100 repetitions are reported. }
\label{table:robust_gaussian_noise}
\resizebox{\linewidth}{!}{
\begin{tabular}{|c|c|c|c|c|c|c|}
\hline
Shape & Para. & Method & $n=300$ & $n=500$ & $n=700$ & $n=1000$\\
\hline \hline
\multirow{6}{*}{Triangle} & \multirow{3}{*}{$\bC$} & RobustRMRE & 0.1440(0.0150) & 0.0885(0.0168) & 0.0600(0.0077) & 0.0622(0.0062) \\ 
& & RLRME & 0.2172(0.0116) & 0.1990(0.0092)  & 0.1727(0.0064) & 0.1427(0.0057) \\ 
& & RMRE & 0.6123(0.1349) & 0.8563(0.1645) & 0.7337(0.1248) & 0.5503(0.0650) \\
\cline{2-7} 
& \multirow{3}{*}{$\gamma$} & RobustRMRE & 0.3419(0.1251) & 0.1851(0.0711) & 0.1106(0.0419) & 0.0899(0.0347) \\ 
& &  RLRME  & 0.6011(0.1842) & 0.4174(0.1220)  & 0.3609(0.1090)  & 0.2900(0.0810) \\ 
& & RMRE & 1.8787(0.7706) & 1.7311(0.7102) & 1.3096(0.4453) & 0.9755(0.3208) \\ 
\hline \hline
\multirow{6}{*}{Circle} &\multirow{3}{*}{$\bC$}    & RobustRMRE & 0.1924(0.0159) & 0.1075(0.0185) & 0.0606(0.0098) & 0.0403(0.0032) \\ 
& & RLRME & 0.2763(0.0082) & 0.2490(0.0099)  & 0.2166(0.0083) & 0.1796(0.0084) \\ 
& & RMRE & 0.6470(0.1362)  & 0.9113(0.1528) & 0.7231(0.1106) & 0.5448(0.0644) \\ 
\cline{2-7} 
& \multirow{3}{*}{$\gamma$} & RobustRMRE & 0.4352(0.1286) & 0.2192(0.0761) & 0.1079(0.0416) & 0.0670(0.0227)  \\ 
& & RLRME      & 0.6440(0.1726)  & 0.4870(0.1541)  & 0.3972(0.1222) & 0.3192(0.0920)  \\ 
& & RMRE       & 1.9324(0.7693) & 1.8282(0.7066) & 1.2984(0.4442) & 0.9762(0.3591)  \\ 
\hline \hline
\multirow{6}{*}{Butterfly} &\multirow{3}{*}{$\bC$} & RobustRMRE & 0.2766(0.0095) & 0.2309(0.0103) & 0.1632(0.0209) & 0.1149(0.0137) \\ 
&  & RLRME & 0.2967(0.0094) & 0.2726(0.0112) & 0.2410(0.0081)  & 0.1996(0.0076) \\ 
& & RMRE & 0.4828(0.0621) & 0.6332(0.0928) & 0.8299(0.1162) & 0.9296(0.1105) \\ 
\cline{2-7} 
& \multirow{3}{*}{$\gamma$} & RobustRMRE & 0.5529(0.1547) & 0.4366(0.1175) & 0.2683(0.0966) & 0.1761(0.0574) \\ 
& & RLRME & 0.6431(0.1860)  & 0.5522(0.1642) & 0.4051(0.1167) & 0.3338(0.1027) \\ 
& & RMRE & 1.5754(0.5818) & 1.4130(0.5340)   & 1.3980(0.5066)  & 1.3328(0.4371) \\ \hline 
\end{tabular}}
\end{table}

\begin{table}[ht]
\caption{Estimation errors for 2D shapes in the normal model as the noise follows standard Cauchy distribution. The means (standard deviation) of the RMSEs of $\hat \bC$ and $\hat \gamma$ with 100 repetitions are reported. }
\label{table:robust_cauchy}
\resizebox{\linewidth}{!}{
\begin{tabular}{|c|c|c|c|c|c|c|}
\hline 
Shape & Para. & Method & $n=300$ & $n=500$ & $n=700$ & $n=1000$ \\ 
\hline \hline
\multirow{6}{*}{Triangle} & \multirow{3}{*}{C} & RobustRMRE & 0.1643(0.0128) & 0.1171(0.0130)  & 0.0685(0.0143)  & 0.0607(0.0085) \\ 
& & RMRE & 1.7292(4.3080) & 3.1972(6.7697) & 4.6344(24.463) & 3.0650(9.2047)\\ 
& & RLRME & 0.2219(0.0052)  & 0.2011(0.0054) & 0.1837(0.0057) & 0.1580(0.0048)\\ 
\cline{2-7}
& \multirow{3}{*}{$\gamma$} & RobustRMRE & 0.3992(0.1189) & 0.2530(0.0990) & 0.1486(0.0623) & 0.1182(0.0456) \\ 
& & RMRE & 5.5389(12.511) & 6.4430(12.667) & 7.9979(42.893) & 5.0391(15.485) \\ 
& & RLRME & 0.5862(0.1694) & 0.4667(0.1285) & 0.4025(0.1099) & 0.3076(0.1011) \\ 
\hline \hline
\multirow{6}{*}{Circle} & \multirow{3}{*}{C} & RobustRMRE & 0.2158(0.0162) & 0.1985(0.0154) & 0.0923(0.0142) & 0.0611(0.0257) \\
& & RMRE & 1.4626(5.3714) & 5.0158(14.751) & 2.0511(2.8052) & 12.075(98.787) \\ 
& & RLRME & 0.2847(0.0052) & 0.2593(0.0072) & 0.2320(0.0078) & 0.1987(0.0118) \\ 
\cline{2-7}
& \multirow{3}{*}{$\gamma$} & RobustRMRE & 0.5134(0.1821) & 0.3874(0.1319) & 0.1917(0.0660)  & 0.1263(0.0908) \\
& & RMRE & 3.7961(9.1388) & 9.0904(21.233) & 3.9441(5.6502) & 15.248(115.30) \\ 
& & RLRME & 0.6609(0.1794) & 0.5335(0.1282) & 0.4126(0.1229) & 0.3333(0.1068) \\ 
\hline \hline
\multirow{6}{*}{Butterfly} & \multirow{3}{*}{C} & RobustRMRE & 0.2834(0.0081) & 0.2950(0.0255) & 0.2035(0.0132) & 0.1264(0.0128) \\
& & RMRE & 1.1195(2.8638) & 2.2435(5.7978) & 9.5673(44.680) & 3.1074(4.5313) \\ 
& & RLRME & 0.2946(0.0047) & 0.2706(0.0058) & 0.2461(0.0070) & 0.2143(0.0061) \\ 
\cline{2-7}
& \multirow{3}{*}{$\gamma$} & RobustRMRE & 0.5635(0.1931) & 0.5175(0.1635) & 0.3371(0.1005) & 0.1942(0.0755) \\
& & RMRE & 4.3562(13.464) & 4.7767(10.976) & 13.589(49.213) & 4.7824(10.478)\\
& & RLRME & 0.6330(0.1829) & 0.4982(0.1299) & 0.4534(0.1238) & 0.3440(0.1035)\\
\hline
\end{tabular}}
\end{table}

\subsection{Real Data Analysis}

We compared RMRE, SRRE, and SDNCMV on the electroencephalography (EEG) data of alcoholism \citep{zhang1995event}. 
There are 77 individuals with alcoholism and 44 individuals as the control (non-alcoholic). The subjects are exposed to a stimulus and the voltage values were measured from 64 channels of electrodes placed on the subject's scalp for 256 time points and 120 trials. We average these sampling data out of 120 trials and collect 122 matrices of size $256\times 64$ of the individuals. The responses of these individuals are taken as either 0 (non-alcoholic) or 1 (alcoholic). 
The classical linear model deals with vector-valued predictors, which will lead to poor performance if we simply vectorize the matrix sampling data into a vector. For instance, the dimension of the sampling unit is $256\times64=16,384$ but the sample size is only 122. Moreover, vectorization destroys the wealth of structural information inherently possessed in the matrix data~\citep{zhou2014regularized}.

We apply RMRE, SRRE, and SDNCMV to the data and evaluate the prediction error on the testing set via the cross-validation. We used the same modified cross-validation method in \cite{zhou2014regularized} to achieve a fairer way in tuning the regularization parameter. Specifically, we divided the full data into a training and a testing
sample using $k$-fold cross-validation. 
Then for the training data, we further employed a 5-fold cross-validation
to tune the shrinkage parameter $\lambda$. We then applied the tuned model that is fully based
on the training data now to the testing data and evaluated the misclassification error rate
for testing. We report the average performance on the random partition of the training and testing sets in cross validation with $100$ repetitions.

\begin{table}[t!]
\caption{Misclassification rates (standard deviation in parentheses)  of RMRE, SRRE, and SDNCMV for the EEG data}
\label{table:eeg-data}
\resizebox{\linewidth}{!}{\begin{tabular}{|c|c|c|c|c|}
\hline
Method & Leave-one-out CV & 5-fold CV & 10-fold  CV & 20-fold CV \\ \hline \hline
\textbf{RMRE} & 0.2131 & 0.2298 (0.0255) & 0.2255 (0.0214) & 0.2186 (0.0104) \\
SRRE & 0.2131 & 0.2332 (0.0225) & 0.2228 (0.0188) & 0.2171 (0.0122) \\
SDNCMV &  0.2459 & 0.2348 (0.0064) & 0.2349 (0.0071) & 0.2610 (0.0086) \\
\hline
\end{tabular}}
\end{table}

We also apply RMRE, SRRE, and SDNCMV to classify the IEEE leucorrhoea microscopic images \citep{hao2019automatic},
which consist of 6 categories: Erythrocytes (Ery, also known as red blood cells), Leukocytes (Leu, also known as white blood cells), Molds, Epithelial Cells (Epi), Pyocytes (Pyo) and Trichomonads. Examples of these images are present in Figure~\ref{fig:leu-sample} where Trichomonads are not found in this dataset. To evaluate the  performance, we sample $120$ images from the categories of Leu and Pyo ($60$ images each) and downsample the resolution of each image to $32$ by $32$.  
The comparisons on the EEG data and the leucorrhoea data for leave-one-out, 5, 10, and 20-fold cross-validation are reported in Tables~\ref{table:eeg-data} and \ref{table:leucorrhoea-data}, where the mean misclassification rates and its standard deviation over 100 runs are recorded (the standard deviation of leave-one-out CV is zero as the training/testing partition and the algorithms are deterministic). 

\begin{figure}[ht]
  \centering
  \includegraphics[width=0.19\linewidth,height=0.12\textheight]{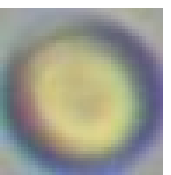}
  \includegraphics[width=0.19\linewidth,height=0.12\textheight]{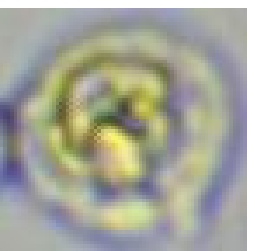}
  \includegraphics[width=0.19\linewidth,height=0.12\textheight]{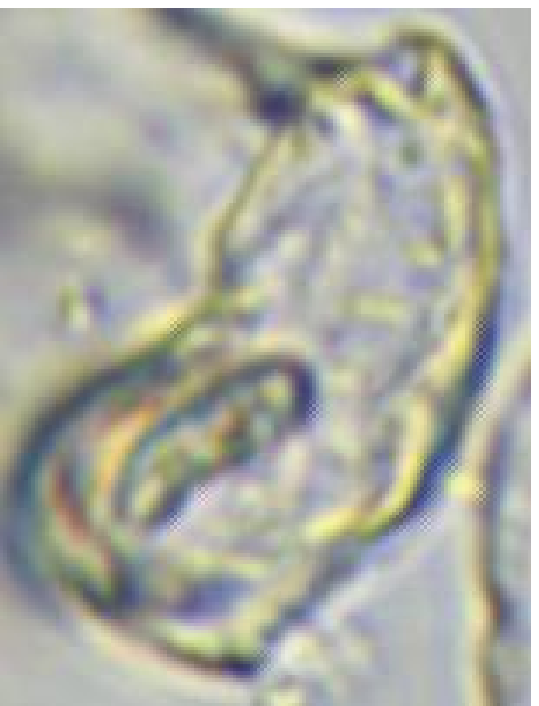} 
  \includegraphics[width=0.19\linewidth,height=0.12\textheight]{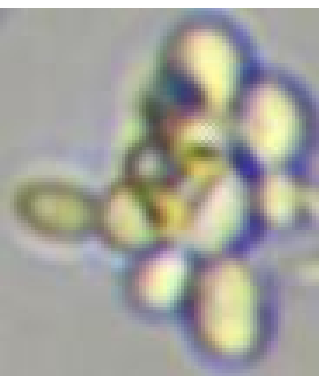} 
  \includegraphics[width=0.19\linewidth,height=0.12\textheight]{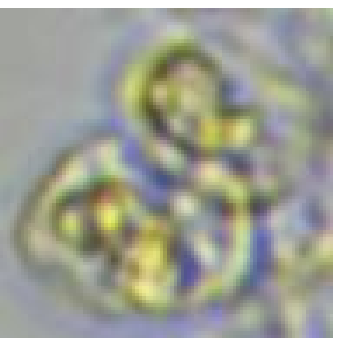}
  \caption{Microscopic images for each category. The types of images are shown as Erythrocytes (Ery, \nth{1}), Leukocytes (Leu, \nth{2}), Epithelial Cells (Epi, \nth{3}), Mildew (Mid, \nth{4}), Pyocytes (Pyo, \nth{5})}
  \label{fig:leu-sample}
\end{figure}


\begin{table}[t!]
\caption{Misclassification rates (standard deviation in parentheses)  of RMRE, SRRE, and SDNCMV for the leucorrhoea data}
\label{table:leucorrhoea-data}
\resizebox{\linewidth}{!}{
\begin{tabular}{|c|c|c|c|c|}
\hline
Method & Leave-one-out CV & 5-fold CV & 10-fold  CV & 20-fold CV \\ \hline \hline
\textbf{RMRE} & 0.2750 & 0.2293 (0.0213) & 0.2537 (0.0204) & 0.2528 (0.0172)\\
SRRE & 0.2583 & 0.2605 (0.0251) & 0.2593 (0.0149) & 0.2582 (0.0161) \\
SDNCMV & 0.2583 & 0.2583 (0.0081) & 0.2461 (0.0106) & 0.2653 (0.0076) \\
\hline
\end{tabular}}
\end{table}

\section{Summary}
The proposed method, RMRE, has a better or comparable performance in general.  Compared with SDNCMV, RMRE performs better in the EEG data and is comparable in the leucorrhoea data. Compared with SRRE, RMRE performs better in the leucorrhoea data and is comparable in the EEG data.  In addition, empirically we observe that both RMRE and SRRE run much faster than SDNCMV. This is expected as SDNCMV minimizes an energy similar to \eqref{eq:optimization} without the rank constraint, which means there are $qm+p$ parameters to be estimated. In comparison, RMRE and SRRE only estimate $r(q+m)+p$ parameters, which is much smaller when $r$ is small.
Both SRRE and RLRME are estimators with nuclear norm-based regularization that promote low-rankness. Therefore, our numerical experiments show the advantage of our method of ``double regularization'' that is based on both low-rank constraint and sparse regularization. In addition, the performance of our estimator is not sensitive to the choice of the rank $r$ and the estimation errors are usually stable for a large range of choices of the rank $r$.  

\section*{Acknowledgement}
The authors thank the editor, referees, and Yiyuan She for their valuable comments.
This work was partially supported by NSF grants (DMS-1924792 and CNS-1818500).

\bibliographystyle{unsrtnat}
\bibliography{matrix_reg_ref}

\appendix

\section{Appendix}

\textbf{Proof of Proposition~\ref{prop:convergence}}
\begin{proof}
  By the line search rule, we have that $F(\bC^{(k+1)},\gamma^{(k+1)})\leq F(\bC^{(\iter)},\gamma^{(\iter)})$ for all $k\geq 1$. As a result, the limit $\lim_{k\rightarrow\infty}F(\bC^{(\iter)},\gamma^{(\iter)})$ exists. Assume that one of the limiting point of the sequence $(\bC^{(\iter)},\gamma^{(\iter)})$ is $(\tilde{\bC},\tilde{\gamma})$, then the line search rule implies that $\frac{\partial}{\partial_{\gamma}} F(\tilde{\bC},\tilde{\gamma})=0$ and $\frac{\partial}{\partial_{\bC}} F(\tilde{\bC},\tilde{\gamma})=0$. 
  \end{proof}
\textbf{Proof of Lemma~\ref{lemma:general_manifold}}
\begin{proof}
We assume that for any $\bx$ in the neighborhood of $\bx^*$, $\bx-\bx^*$ can be uniquely decomposed into $\bx-\bx^*=\bx^{(1)}+\bx^{(2)}$ such that $\bx^{(1)}\in T_{\bx^*}(\calM)$ and $\bx^{(2)}\in T_{\bx^*,\perp}(\calM)$. Let $b=\|\bx-\bx^*\|$, then if $b\leq c_0$, then $\|\bx^{(1)}\|\leq b$ and $\|\bx^{(2)}\|\leq C_Tb^2$.
Let $\bv=\frac{\bx-\bx^*}{\|\bx-\bx^*\|}$ be the direction from $\bx^*$ to $\bx$, then $f(\bx)-f(\bx^*)=\int_{\bx^*}^{\bx}\langle\bv,\nabla f(\bt)\rangle\di\bt=\langle\bx-\bx^*, \nabla f(\bx^*) \rangle+\int_{\bx^*}^{\bx}\langle\bv,\nabla f(\bt)-\nabla f(\bx^*)\rangle\di\bt$, where the first term can be bounded by
\begin{align*}
\langle\bx-\bx^*, \nabla f(\bx^*) \rangle=\langle\bx^{(1)}+\bx^{(2)}, \nabla f(\bx^*) \rangle
&=\langle\bx^{(1)},\Pi_{T_{\bx^*}(\calM)}\nabla f(\bx^*) \rangle+\langle\bx^{(2)},\Pi_{T_{\bx^*}(\calM),\perp}\nabla f(\bx^*) \rangle\\
&\leq b \|\Pi_{T_{\bx^*}(\calM)}\nabla f(\bx^*)\|+C_Tb^2\|\Pi_{T_{\bx^*}(\calM),\perp}\nabla f(\bx^*)\|.
\end{align*}
On the other hand, the second term can be bounded by
$
\int_{\bx^*}^{\bx}\langle\bv,\nabla f(\bt)-\nabla f(\bx^*)\rangle\di\bt\geq \frac{1}{2}b^2C_{H,1}.
$
Combining these two inequalities, the lemma is proved.

\end{proof}

\textbf{Proof of Lemma~\ref{lemma:curvature}}
\begin{proof}
  By following the  tangent space  of the set of low-rank matrices in the literature \citep{Absil2015,10.5555/3291125.3309642}, we have the following explicit expressions of the tangent space of $\calM$ at $(\bC^*,\gamma^*)$ that
  $
  T_{(\bC^*,\gamma^*),\calM}=\{(\bA\bV^*\bV^{*T}+\bU^*\bU^{*T}\bB,\by): \bA,\bB\in\reals^{q\times m}, \by\in\reals^p \},
  $
  where $\bU^*\in\reals^{q\times r}$ and $\bV^*\in\reals^{m\times r}$ are obtained from the singular value decomposition of $\bC^*$ such that $\bC^*=\bU^*\Sigma\bV^{*T}$. The projection operators in Lemma~\ref{lemma:general_manifold} are given by
  \begin{align*}
  \Pi_{T_{(\bC^*,\gamma^*),\calM}}(\bD,\by)&=(\bU^*\bU^{*T}\bD+\bD\bV^*\bV^{*T}-\bU^*\bU^{*T}\bD\bV^*\bV^{*T},\by),\\  \Pi_{T_{(\bC^*,\gamma^*),\calM},\perp}(\bD,\by)&=((\bI-\bU^*\bU^{*T})\bD(\bI-\bV^*\bV^{*T}),0).
  \end{align*}

By choosing $\bU^{*\perp}\in\reals^{q\times (q-r)}$  such that $[\bU^{*\perp},\bU^*]\in\reals^{q\times q}$ is orthogonal and choose $\bV^{*\perp}\in\reals^{m\times (m-r)}$  such that $[\bV^{*\perp},\bV^*]\in\reals^{m\times m}$ is orthogonal, then we can express the projectors as follows: for any $(\bC,\gamma)$ close to $(\bC^*,\gamma^*)$, we may write $\bC-\bC^*=\bU^*\bD_1\bV^{*T}+\bU^{*\perp}\bD_2\bV^{*T}+\bU^{*}\bD_3\bV^{^*\perp T}+\bU^{*\perp}\bD_4\bV^{^*\perp T}$,
$
\|\Pi_{T_{(\bC^*,\gamma^*),\calM},\perp}(\bC-\bC^*,\gamma-\gamma^*)\|=\|(\bU^{*\perp T}\bD_4\bV^{^*\perp},0)\|=\|\bD_4\|_F,
$
and
$
\|\Pi_{T_{(\bC^*,\gamma^*),\calM}}(\bC-\bC^*,\gamma-\gamma^*)\|=\sqrt{\|\bD_1\|_F^2+\|\bD_2\|_F^2+\|\bD_3\|_F^2+\|\gamma-\gamma^*\|^2}.
$
By $\rank(\bD)=r$, we have $\bD_4=\bD_2(\bD_1+\bU^{*T}\bC^*\bV{^*})^{-1}\bD_3$. Thus,
when $\|\bC-\bC^*\|_F\leq \sigma_r(\bC^*)/2$,
\[
\|\Pi_{T_{(\bC^*,\gamma^*),\calM},\perp}(\bC-\bC^*,\gamma-\gamma^*)\|\leq \frac{\|\bD_2\|_F\|\bD_3\|_F}{\sigma_r(\bD_1)-\|\bC-\bC^*\|_F}\leq \frac{2\|\Pi_{T_{(\bC^*,\gamma^*),\calM}}(\bC-\bC^*,\gamma-\gamma^*)\|^2}{\sigma_r(\bC^*)},
\]
and Lemma~\ref{lemma:curvature} is proved.
\end{proof}

\begin{proof}[Proof of Theorem~\ref{thm:asymptotics}]
In the proof,  we mainly work with
$
f(\bC,\gamma)= \sum_{i=1}^n l(y_i,\langle \Xb_i,\Cb\rangle+\gamma^T\zb_i),
$
and it is sufficient to show that for all $(\bC,\gamma)\in\calM$ such that 
$
 \sqrt{\|\bC-\bC^*\|_F^2+\|\gamma-\gamma^*\|^2}\geq C_{error,1}$ and $ f(\bC,\gamma)-f(\bC^*,\gamma^*)\geq \lambda P(\bC^*,\gamma^*).  $

To prove it, we first calculate the constants and the operators in Lemma~\ref{lemma:general_manifold} as follows. For all three models, the constant on the curvature of $\calM$ is the same. Hence,  we may choose $C_T=2/\sigma_{\min}(\bC^*)$.
 In addition, as discussed in the proof of Lemma~\ref{lemma:curvature}, the projectors $\Pi_{T}$ and $\Pi_{T,\perp}$ at $(\bC^*,\gamma^*)$ can be defined by
 \begin{align}\label{eq:projection}
  \Pi_{T_{(\bC^*,\gamma^*)}(\calM)}(\bC,\gamma)=(\bC-\Pi_{\bU^*,\perp}\bC\Pi_{\bV^*,\perp},\gamma)\mbox{ and }
  \Pi_{T_{(\bC^*,\gamma^*)}(\calM),\perp}(\bC,\gamma)=(\Pi_{\bU^*,\perp}\bC\Pi_{\bV^*,\perp},0),
 \end{align}
 where $\bU^*\in\reals^{q\times r}$ and $\bV^*\in\reals^{m\times r}$ are the left and right singular components of $\bC^*$, $\Pi_{\bU^*}=\bU^*\bU^{*T}$, $\Pi_{\bU^*,\perp}=\bI-\Pi_{\bU^*}$, $\Pi_{\bV^*}=\bV^*\bV^{*T}$, and $\Pi_{\bU^*,\perp}=\bI-\bV^*\bV^{*T}$. As for the first derivative, we have
$
\nabla f(\bC^*,\gamma^*)=\sum_{i=1}^nw_{1,i}\mathrm{vec}(\bX_i,\bz_i),
$
where
\begin{align*}
w_{1,i}=\begin{cases}2\epsilon_i,\,\,\text{for the matrix variate regression model};\\2\frac{\epsilon_i}{\max(|\epsilon_i|.\alpha)},\,\,\text{for the robust matrix variate regression model};\\\epsilon_i,\,\,\text{for the logistic matrix variate regression model}.\end{cases}
\end{align*}
Combining it with \eqref{eq:projection},
\begin{align*}
  \Pi_{T_{(\bC^*,\gamma^*)}(\calM)}\nabla f(\bC^*,\gamma^*)&=(\Pi_{\bU}\bX_i+\bX_i\Pi_{\bV}-\Pi_{\bU}\bX_i\Pi_{\bV},\bz_i),\\
  \Pi_{T_{(\bC^*,\gamma^*)}(\calM),\perp}\nabla f(\bC^*,\gamma^*)&=(\bX_i-\Pi_{\bU}\bX_i-\bX_i\Pi_{\bV}+\Pi_{\bU}\bX_i\Pi_{\bV},0).
\end{align*}
\begin{lemma}\label{lemma:projection_randomvector}
For any projection matrix $\bU\in\reals^{n\times d}$ and a random vector $\bx\in\reals^n$ with each element i.i.d. sampled from a sub-Gaussian distribution of parameter $\sigma_0$, then for $t\geq 2$,
\[ 
\Pr(\|\bx^T\bU\|\geq t\sigma_0\sqrt{d})\leq C\exp(-Ct).
\]
\end{lemma}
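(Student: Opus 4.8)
The plan is to rewrite $\|\bx^T\bU\|$ as a supremum of one-dimensional projections and then control that supremum with a covering (net) argument, which converts the $d$-dimensional, coordinate-dependent norm into a union bound over finitely many scalar sub-Gaussian tails. Throughout I read ``projection matrix $\bU\in\reals^{n\times d}$'' as a matrix with orthonormal columns, so that $\bU^T\bU=\bI_d$ and $\bx^T\bU$ is the vector of coefficients of the projection of $\bx$ onto the $d$-dimensional subspace $S=\mathrm{range}(\bU)$.

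First I would use the variational formula for the Euclidean norm: $\|\bx^T\bU\|=\sup_{\bv\in\reals^d,\,\|\bv\|=1}\langle\bv,\bU^T\bx\rangle=\sup_{\bv\in\reals^d,\,\|\bv\|=1}\langle\bU\bv,\bx\rangle$, and since $\bU$ preserves norms, $\bU\bv$ ranges exactly over the unit sphere of $S$. For each fixed unit vector $\bw=\bU\bv$, the scalar $\langle\bw,\bx\rangle=\sum_i w_ix_i$ is a linear combination of independent mean-zero sub-Gaussian variables of parameter $\sigma_0$ with $\sum_i w_i^2=1$; hence it is itself sub-Gaussian of parameter $\sigma_0$, giving the one-dimensional tail $\Pr(\langle\bw,\bx\rangle\geq u)\leq\exp(-u^2/(2\sigma_0^2))$.

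Next I would discretize. Let $\mathcal{N}$ be a $\tfrac12$-net of the unit sphere of $S$ (equivalently of $\{\bv\in\reals^d:\|\bv\|=1\}$); a standard volumetric estimate gives $|\mathcal{N}|\leq 5^d$. The elementary net lemma yields $\|\bx^T\bU\|\leq 2\max_{\bv\in\mathcal{N}}\langle\bU\bv,\bx\rangle$, so a union bound over $\mathcal{N}$ combined with the scalar tail gives, for every $u>0$,
\[
\Pr\bigl(\|\bx^T\bU\|\geq 2u\bigr)\leq 5^d\exp\!\bigl(-u^2/(2\sigma_0^2)\bigr).
\]
Taking $2u=t\sigma_0\sqrt{d}$ produces $5^d\exp(-t^2d/8)=\exp\!\bigl(d(\ln 5-t^2/8)\bigr)$, which in fact yields the stronger estimate $\Pr(\|\bx^T\bU\|\geq t\sigma_0\sqrt{d})\leq C\exp(-Ctd)$ once $t$ exceeds a universal threshold: when $t^2/16\geq\ln 5$, half the exponent absorbs the $5^d$ factor and the remaining $\exp(-t^2d/16)\leq\exp(-Ctd)$ via $t^2\geq t\cdot t_0$ and $d\geq1$. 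In particular this implies the claimed $C\exp(-Ct)$.

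The main obstacle is precisely that the $d$ coordinates of $\bU^T\bx$ are not independent, so one cannot directly treat $\|\bU^T\bx\|^2$ as a chi-square-type sum of independent squares; the net argument circumvents this, at the cost of the $5^d$ cardinality that must be balanced against the Gaussian tail, and this balance is the source of the $\sqrt{d}$ scaling inside the probability. A minor bookkeeping point is the small-$t$ regime: for $2\leq t$ below the universal threshold the right-hand side $C\exp(-Ct)$ can be made at least $1$ by choosing the leading constant appropriately, so the inequality holds trivially there. As an alternative route, one could instead invoke the Hanson--Wright inequality for the quadratic form $\bx^{T}\bU\bU^{T}\bx$, using $\mathrm{tr}(\bU\bU^{T})=d$, $\|\bU\bU^{T}\|=1$, and $\|\bU\bU^{T}\|_F^2=d$, to obtain the same tail without a net.
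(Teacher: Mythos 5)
Your proof is correct, but it takes a genuinely different route from the paper. The paper's proof is a concentration-of-measure argument: it first bounds the mean via Jensen/Cauchy--Schwarz, $\Expect \|\bx^T\bU\|\leq \sqrt{\Expect[\bx^T\bU\bU^T\bx]}\leq \sigma_0\sqrt{d}$ (using $\sum_{i=1}^n\|\bU(i,:)\|^2=d$), and then applies a sub-Gaussian version of McDiarmid's bounded-differences inequality, noting that zeroing out the $i$-th coordinate of $\bx$ changes $\|\bx^T\bU\|$ by at most $|\bx_i|\|\bU(i,:)\|$, so the increments are sub-Gaussian with parameters whose squares again sum to $\sigma_0^2 d$; this yields concentration of $\|\bx^T\bU\|$ around a mean of order $\sigma_0\sqrt{d}$ with a $\exp(-c(t-1)^2)$ tail. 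You instead discretize the unit sphere of the column span with a $\tfrac12$-net of size $5^d$, reduce to scalar sub-Gaussian tails for $\langle \bU\bv,\bx\rangle$, and absorb the $5^d$ cardinality into the Gaussian exponent for $t$ above a universal threshold, handling small $t$ by inflating the leading constant. Both arguments are sound and both implicitly use the mean-zero reading of sub-Gaussianity (without which the lemma fails); your interpretation of $\bU$ as having orthonormal columns matches the paper's usage. What each buys: the paper's route is shorter given the cited McDiarmid-type result and directly produces concentration around the mean; your net argument is self-contained and elementary (only scalar tails plus a volumetric bound) and in fact yields the stronger exponent $\exp(-Ctd)$ in the large-$t$ regime, at the cost of the threshold/small-$t$ bookkeeping. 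Your Hanson--Wright alternative, using $\mathrm{tr}(\bU\bU^T)=\|\bU\bU^T\|_F^2=d$ and $\|\bU\bU^T\|=1$, is likewise a valid one-line substitute, again different from what the paper does.
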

\begin{proof}
This lemma follows from the McDiarmid's inequality~\citep[Theorem 3]{maurer2021concentration}. In particular, we have that
$
\Expect \|\bx^T\bU\|\leq \sqrt{\Expect [\bx^T\bU\bU^T\bx]}=\sqrt{\Expect \left[\sum_{i=1}^n\bx_i^2\sum_{j=1}^d\bU_{ij}^2\right]}\leq \sigma_0^2d,
$
and let $\bx^{(i)}\in\reals^n$ be defined such that $\bx^{(i)}_j=\bx_j$ if $j\neq i$ and $\bx^{(i)}_i=0$, then $|\|\bx^T\bU\|-\|\bx^{(i)T}\bU\||\leq |\bx_i|\|\bU(i,:)\|$, where $\|\bU(i,:)\|$ represents the norm of the $i$-th row of $\bU$. As a result, $\|\bx^T\bU\|-\|\bx^{(i)T}\bU\|$ is sub-Gaussian with parameter $\sigma_0\|\bU(i,:)\|$. Combining it with the fact that  $\sum_{i=1}^n\|\bU(i,:)\|^2=d$ and the sub-Gaussian version of the McDiarmid's inequality~\citep[Theorem 3]{maurer2021concentration}, the lemma is proved.
\end{proof}

The Assumption 1 and Lemma~\ref{lemma:projection_randomvector} imply that with a probability of at least $1-C\exp(-Cn)-C\exp(-Ct(r(q+m)+p))$, 
\begin{align}\label{eq:concentration}
\Big\|\sum_{i=1}^nw_{1,i} \mathrm{vec}\Big(\Pi_{T_{(\bC^*,\gamma^*)}(\calM)}\nabla f(\bC^*,\gamma^*)\Big)\Big\|&\leq tC_1t\sigma_0\sqrt{n(r(q+m)+p)} ,\\
\Big\|\sum_{i=1}^nw_{1,i} \mathrm{vec}\Big(\Pi_{T_{(\bC^*,\gamma^*)}(\calM),\perp}\nabla f(\bC^*,\gamma^*)\Big)\Big\|&\leq tC_1t\sigma_0\sqrt{n(qm-r(q+m)+p)} \nonumber,
\end{align}
where
\begin{align*}
\sigma_0=\begin{cases}\sigma,\,\,\text{for the matrix variate regression model;}\\
1,\,\,\text{for the robust matrix variate regression model;}\\1,\,\,\text{for the logistic matrix variate regression model}.\end{cases}
\end{align*}

Therefore, for the Hessian matrix defined in \eqref{eq:hessian}, we have $C_{H,1}=C_2n$. Plug in Lemma~\ref{lemma:general_manifold}, we have that for $b=\sqrt{\|\hat{\bC}-\bC^*\|_F^2+\|\hat{\gamma}-\gamma^*\|^2}$, we have
\begin{align*}
&f(\bC,\gamma)-f(\bC^*,\gamma^*)\geq \frac{1}{2}b^2C_{2} - b \|\Pi_{T_{\bx^*}(\calM)}\nabla f(\bx^*)\|-C_Tb^2\|\Pi_{T_{\bx^*}(\calM),\perp}\nabla f(\bx^*)\|
\end{align*}
which is larger than $\lambda P$ if
$
\frac{b^2C_{2}}{6}\geq \max\left(\lambda P, b \|\Pi_{T_{\bx^*}(\calM)}\nabla f(\bx^*)\|, C_Tb^2\|\Pi_{T_{\bx^*}(\calM),\perp}\nabla f(\bx^*)\|\right),
$
or equivalently, if $C_2\sqrt{n}\geq 6 C_1t\sigma_0\sqrt{(qm+p)} $, and
\begin{align}\label{eq:thmasymtptotics1}
b\geq \max\left(\frac{6C_1t\sigma_0\sqrt{n(r(q+m)+p)} }{C_2},\sqrt{\frac{6\lambda P(\bC^*,\gamma^*)}{C_2}}\right).
\end{align}
As a result,
we have 
\begin{align}
    f(\bC,\gamma)-f(\bC^*,\gamma^*)\geq \lambda P(\bC^*,\gamma^*) 
    \mbox{ for all } \{(\bC,\gamma)\in\calM: \dist((\bC,\gamma),(\bC^*,\gamma^*))\in \mathcal{I}\},\label{eq:estimation1}
\end{align}
where\[
  \mathcal{I}=\Bigg[\max\left(\frac{6C_1t\sigma_0\sqrt{n(r(q+m)+p)} }{C_2},\sqrt{\frac{6\lambda P(\bC^*,\gamma^*)}{C_2}}\right),c_0\Bigg].
\]

Next, for all $(\bC,\gamma)$ such that $\dist((\bC,\gamma),(\bC^*,\gamma^*))=\sqrt{\|{\bC}-\bC^*\|_F^2+\|{\gamma}-\gamma^*\|^2}=b$ where \begin{equation}\label{eq:conditionb}b\leq c_0, b\geq \frac{4C_1t\sigma_0\sqrt{n(qm+p)} }{C_2 n},  \frac{1}{4}C_2n b^2\geq \lambda P(\bC^*,\gamma^*)\end{equation} we have
\begin{align*}
  f(\bC,\gamma)-f(\bC^*,\gamma^*)\geq \frac{1}{2}C_2n b^2- b \|\nabla f(\bC^*,\gamma^*)\|
  \geq \frac{1}{2}C_2n b^2-bC_1t\sigma_0\sqrt{n(qm+p)} \geq \lambda P(\bC^*,\gamma^*) .
\end{align*}
By \eqref{eq:assumption_asymptotics}, there exists $b$ satisfying \eqref{eq:conditionb} and we have $f(\bC,\gamma)-f(\bC^*,\gamma^*)\geq \lambda P(\bC^*,\gamma^*)$ for all $\{(\bC,\gamma):\dist((\bC,\gamma),(\bC^*,\gamma^*))=b\}$. Since $f$ is convex, $f(\bC,\gamma)-f(\bC^*,\gamma^*)\geq \lambda P(\bC^*,\gamma^*)$ holds for all $\{(\bC,\gamma):\dist((\bC,\gamma),(\bC^*,\gamma^*))\geq b\}$. 
Combining it with \eqref{eq:estimation1}, we have that for all $(\bC,\gamma)\in\calM$ such that $\sqrt{\|\bC-\bC^*\|_F^2+\|\gamma-\gamma^*\|^2}\geq C_{error,1}$, $f(\bC,\gamma)-f(\bC^*,\gamma^*)\geq \lambda P(\bC^*,\gamma^*)$, and the theorem is proved.

\end{proof}

\textbf{Proof of Theorem~\ref{prop:convergence2}}
\begin{proof}
First, 
by Assumption 2, for all $\left\{(\bC,\gamma):\sqrt{\|\bC-\bC^*\|_F^2+\|\gamma-\gamma^*\|^2}=c_0'^2\right\}$, $F(\bC,\gamma)\geq f(\bC,\gamma) > \frac{C_2n}{2}c_0'^2 + \sum_{i=1}^n l(y_i,\langle \Xb_i,\Cb^*\rangle+\gamma^{*T}\zb_i$. Since $F(\bC^{(\iter)}, \gamma^{(\iter)})$ is nonincreasing, we can choose a small initial step size $\alpha_0>0$, such that if the initial step size $\alpha$ in line search satisfies $\alpha<\alpha_0$, then $(\bC^{(\iter)}, \gamma^{(\iter)})\in\mathcal{B}$ for all $\iter\geq 1$, where $\mathcal{B}=\{(\bC,\gamma): \dist((\bC,\gamma),(\bC^*,\gamma^*))\leq b\}$.

 By the proof of \eqref{eq:concentration} we have
\begin{align*}
\|T_{(\bC^*,\gamma^*),\calM}\nabla f(\bC^*,\gamma^*)\|&\leq nC_1t\sigma_0\sqrt{n(r(q+m)+p)} \\
\|T_{(\bC^*,\gamma^*),\calM,\perp}\nabla f(\bC^*,\gamma^*)\|&\leq
nC_1t\sigma_0\sqrt{n(qm-r(q+m)+p)} .
\end{align*}
Then, for $(\bC,\gamma)$ with $\dist((\bC,\gamma),(\bC^*,\gamma^*))=b$, Assumption 2 implies that if we let $\bv\in\reals^{qm+p}$ be the unit vector $\frac{\mathrm{vec}((\bC^*,\gamma^*)-(\bC,\gamma))}{\|\mathrm{vec}((\bC^*,\gamma^*)-(\bC,\gamma))\|}$, then  
\begin{align*}
\langle\bv, T_{(\bC^*,\gamma^*),\calM}\nabla f(\bC,\gamma)\rangle&\geq nC_2b-C_1t\sigma_0\sqrt{n(r(q+m)+p)} \\
\langle\bv, T_{(\bC^*,\gamma^*),\calM,\perp}\nabla f(\bC,\gamma)\rangle&\leq
nC_3b+C_1t\sigma_0\sqrt{n(qm-r(q+m)+p)} .
\end{align*}
That is, if 
\begin{align}\label{eq:theorem2_conditions0}
  nC_2b-C_1t\sigma_0\sqrt{n(r(q+m)+p)} >
  C_Tb\left(nC_3b+C_1t\sigma_0\sqrt{n(qm-r(q+m)+p)} \right)+\lambda C_{partial},
\end{align}
and $(\bC^+,\gamma^+)$ represents the outcome of the gradient descent algorithm after one iteration, then \begin{equation}\label{eq:bound}\dist((\bC^+,\gamma^+),(\bC^*,\gamma^*))\leq \dist((\bC,\gamma),(\bC^*,\gamma^*)).\end{equation} To simplify the condition \eqref{eq:theorem2_conditions0}, the note that its sufficient condition is
\begin{align}\label{eq:theorem2_conditions1}
    \frac{1}{4}nC_2b>\max\biggl\{C_1t\sigma_0\sqrt{n(r(q+m)+p)} ,C_TC_3nb^2, C_TC_1t\sigma_0\sqrt{n(qm-r(q+m)+p)} b,\lambda C_{partial}\biggr\},
\end{align}
which is in turn guaranteed by  
\begin{align}
\sqrt{n}>\frac{4C_TC_1t\sigma_0\sqrt{(qm-r(q+m)+p)} }{C_2},\,\,
\frac{4C_1t\sigma_0\sqrt{(r(q+m)+p)} }{C_2\sqrt{n}}+\frac{4\lambda C_{partial}}{nC_2}<b<\frac{C_2}{4C_TC_3}.\label{eq:theorem2_conditions2}
\end{align}
By the  assumptions in Theorem~\ref{prop:convergence2}, \eqref{eq:theorem2_conditions2} is satisfied with  $b=\dist((\bC^{(0)},\gamma^{(0)}),(\bC^*,\gamma^*))$, and \eqref{eq:bound} implies that $(\bC^{(\iter)}, \gamma^{(\iter)})\in\mathcal{B}\;\forall
\;\iter\geq 1.$
It remains to prove \eqref{eq:stationary}, which is similar to the proof of \eqref{eq:bound_consistency}.
\end{proof}

\end{document}